\title{Learning Causal Graphs with Small Interventions}
\author{
Karthikeyan Shanmugam$^{1}$, Murat Kocaoglu$^{2}$,  Alexandros G. Dimakis$^{3}$, Sriram Vishwanath$^{4}$\\
Department of Electrical and Computer Engineering\\
The University of Texas at Austin, USA\\
$^{1}$\texttt{karthiksh@utexas.edu}, $^{2}$\texttt{mkocaoglu@utexas.edu} \\
$^{3}$\texttt{dimakis@austin.utexas.edu},$^{4} \texttt{sriram@ece.utexas.edu}$}
\newtheorem{definition}{Definition}
\newtheorem{lemma}{Lemma}
\newtheorem{problem}{Problem}
\newtheorem{theorem}{Theorem}
\newcommand{\ess}{\mathcal{E}}
\begin{document}

\maketitle

\begin{abstract}
We consider the problem of learning causal networks with interventions, when each intervention is limited in size under Pearl's Structural Equation Model with independent errors (SEM-IE). The objective is to minimize the number of experiments to discover the causal directions of all the edges in a causal graph. Previous work has focused on the use of separating systems for complete graphs for this task. We prove that any deterministic adaptive algorithm needs to be a separating system in order to learn complete graphs in the worst case. In addition, we present a novel separating system construction, whose size is close to optimal and is arguably simpler than previous work in combinatorics. We also develop a novel information theoretic lower bound on the number of interventions that applies in full generality, including for randomized adaptive learning algorithms. 

For general chordal graphs, we derive worst case lower bounds on the number of interventions. Building on observations about induced trees, we give a new deterministic adaptive algorithm to learn directions on any chordal skeleton completely. In the worst case, our achievable scheme is an $\alpha$-approximation algorithm where $\alpha$ is the independence number of the graph.  We also show that there exist graph classes for which the sufficient number of experiments is close to the lower bound. In the other extreme, there are graph classes for which the required number of experiments is multiplicatively $\alpha$ away from our lower bound. 

In simulations, our algorithm almost always performs very close to the lower bound, while the approach based on separating systems for complete graphs is significantly worse for random chordal graphs.
%
\end{abstract}

\section{Introduction}
Causality is a fundamental concept in sciences and philosophy.  The mathematical formulation of a \emph{theory of causality} in a probabilistic sense has received significant attention recently (e.g. \cite{Pearl2009,hauser2014two,Eberhardt2005,Hyttinen2013,Hu2014}). A formulation advocated by Pearl considers the \emph{structural equation models}: In this framework, $X$ is a cause of $Y$, if $Y$ can be written as $f(X,E)$, for some deterministic function $f$ and some latent random variable $E$. Given two causally related variables $X$ and $Y$, it is not possible to infer whether $X$ \emph{causes} $Y$ or $Y$ \emph{causes} $X$ from random samples, unless certain assumptions are made on the distribution of $E$ and/or on $f$ \cite{Shimizu2006,Hoyer2008}. 
For more than two random variables, \emph{directed acyclic graphs} (DAGs) are the most common tool used for representing causal relations. 
For a given DAG $D=(V,E)$, the directed edge $(X,Y)\in E$ shows that $X$ is a cause of $Y$. 

If we make no assumptions on the data generating process, the standard way of inferring the causal directions is by performing experiments, the so-called \emph{interventions}. An intervention requires modifying the process that generates the random variables: The experimenter has to enforce values on the random variables. This process is different than conditioning as explained in detail in \cite{Pearl2009}. 

The natural problem to consider is therefore minimizing the \textit{number} of interventions required to learn a causal DAG.  
Hauser et al.~\cite{hauser2014two} developed an efficient algorithm that minimizes this number in the worst case. The algorithm is based on optimal coloring of chordal graphs and requires at most $\log \chi$ interventions to learn any causal graph where $\chi$ is the chromatic number of the chordal skeleton. 

However, one important open problem appears when one also considers the \textit{size} of the used interventions: Each intervention is an experiment where the scientist must force a set of variables to take random values. Unfortunately, the interventions obtained in 
\cite{hauser2014two} can involve up to $n/2$ variables. 
The simultaneous enforcing of many variables can be quite challenging in many applications: for example in biology, some variables may not be enforceable at all or may require complicated genomic interventions for each parameter.

In this paper, we consider the problem of learning a causal graph when intervention sizes are bounded by some parameter $k$.
The first work we are aware of for this problem is by Eberhardt et al. ~\cite{Eberhardt2005}, where he provided an achievable scheme.
Furthermore~\cite{EberhardtThesis} shows that the set of interventions to fully identify a causal DAG must satisfy a specific set of combinatorial conditions called \textit{a separating system}\footnote{A separating system is a $0$-$1$ matrix with $n$ distinct columns and each row has at most $k$ ones.}, when the intervention size is not constrained or is 1. In \cite{Hyttinen2013}, with the assumption that the same holds true for any intervention size, Hyttinen et al. draw connections between causality and known separating system constructions. 
One open problem is: If the learning algorithm is \textit{adaptive} after each intervention, is a separating system still needed or can one do better? It was believed that adaptivity does not help in the worst case \cite{EberhardtThesis} and that one still needs a separating system. 

\textbf{Our Contributions:}
We obtain several novel results for learning causal graphs with interventions bounded by size $k$.
The problem can be separated for the special case where the underlying undirected graph (the skeleton) is the complete graph and the more general case where the underlying undirected graph is chordal. 
 \begin{enumerate}[topsep=0pt,itemsep=-1ex,partopsep=1ex,parsep=1ex,leftmargin=1.4em]
    \item For complete graph skeletons, we show that any adaptive deterministic algorithm needs a $(n,k)$ separating system.  This implies that lower bounds for separating systems also hold for adaptive algorithms and resolves the previously mentioned open problem. 
    \item 
	We present a novel combinatorial construction of a separating system that is close to the previous lower bound. This simple construction may be of more general interest in combinatorics. 
	\item Recently~\cite{Hu2014} showed that \textit{randomized} adaptive algorithms need only $\log \log n$ interventions with high probability for the unbounded case.  We extend this result and show that $O \left(\frac{n}{k} \log \log k \right)$ interventions of size bounded by $k$ suffice with high probability. 
    \item We present a more general information theoretic lower bound of $\frac{n}{2k}$ to capture the performance of such randomized algorithms. 
    \item We extend the lower bound for adaptive algorithms for general chordal graphs. We show that over all orientations, the number of experiments from a $(\chi(G),k)$ separating system is needed where $\chi(G)$ is the chromatic number of the skeleton graph. 
    \item We show two extremal classes of graphs. For one of them, the interventions through $(\chi,k)$ separating system is sufficient. For the other class, we need $\frac{\alpha \left(\chi-1 \right)}{2k} \approx \frac{n}{2k}$ experiments in the worst case.
    \item We exploit the structural properties of chordal graphs to design a new deterministic adaptive algorithm that uses the idea of separating systems together with adaptability to Meek rules. 
We simulate our new algorithm and empirically observe that it performs quite close to the $(\chi,k)$ separating system. Our algorithm requires much fewer interventions compared to $(n,k)$ separating systems.
 \end{enumerate}

\section{Background and Terminology}
 \subsection{Essential graphs}
A \emph{causal DAG} $D=(V,E)$ is a directed acyclic graph where $V=\{x_1,x_2 \ldots x_n\}$ is a set of random variables and $(x,y) \in E$ is a directed edge if and only if $x$ is a direct \emph {cause} of $y$. We adopt Pearl's \textit{structural equation model with independent errors} (SEM-IE) in this work (see \cite{Pearl2009} for more details). Variables in $S \subseteq V$  \textit{cause} $x_i$, if $x_i = f(\{x_j\}_{j \in S}, e_y )$ where $e_y$ is a random variable independent of all other variables. 

The causal relations of $D$ imply a set of conditional independence (CI) relations between the variables. A conditional independence relation is of the following form: Given $Z$, the set $X$ and the set $Y$ are conditionally independent for some disjoint subsets of variables $X,Y,Z$. Due to this, causal DAGs are also called \emph{causal Bayesian networks}. A set $V$ of variables is Bayesian with respect to a DAG $D$ if the joint probability distribution of $V$ can be factorized as a product of marginals of every variable conditioned on its parents. 

All the CI relations that are learned statistically through observations can also be inferred from the Bayesian network using a graphical criterion called the \emph{d-separation} \cite{Spirtes2001} assuming that the distribution is faithful to the graph \footnote{Given Bayesian network, any CI relation implied by d-separation holds true. All the CIs implied by the distribution can be found using d-separation if the distribution is faithful. Faithfulness is a widely accepted assumption, since it is known that only a measure zero set of distributions are not faithful \cite{Meek1995b}.}. Two causal DAGs are said to be \emph{Markov equivalent} if they encode the same set of CIs. Two causal DAGs are Markov equivalent if and only if they have the same skeleton\footnote{Skeleton of a DAG is the undirected graph obtained when directed edges are converted to undirected edges.} and the same immoralities\footnote{An induced subgraph on $X,Y,Z$ is an immorality if $X$ and $Y$ are disconnected, $X\rightarrow Z$ and $Z\leftarrow Y$. }. The class of causal DAGs that encode the same set of CIs is called the \emph{Markov equivalence class}. 
We denote the Markov equivalence class of a DAG $D$ by $[D]$.  The graph union\footnote{Graph union of two DAGs $D_1=(V,E_1)$ and $D_2=(V,E_2)$ with the same skeleton is a partially directed graph $D=(V,E)$, where $(v_a,v_b)\in E$ is undirected if the edges $(v_a,v_b)$ in $E_1$ and $E_2$ have different directions, and directed as $v_a\rightarrow v_b$ if the edges $(v_a,v_b)$ in $E_1$ and $E_2$ are both directed as $v_a\rightarrow v_b$.} of all DAGs in $[D]$ is called the \emph{essential graph} of $D$. It is denoted $\ess(D)$. 
 $\ess(D)$ is always a chain graph 
 with chordal\footnote{An undirected graph is chordal if it has no induced cycle of length greater than $3$.} chain components \footnote{This means that $\ess(D)$ can be decomposed as a sequence of undirected chordal graphs $G_1,G_2 \ldots G_m$ (chain components) such that there is a directed edge from a vertex in $G_i$ to a vertex in $G_j$ only if $i<j$ } \cite{Andersson1997}.

The $d$-separation criterion can be used to identify the skeleton and all the immoralities of the underlying causal DAG \cite{Spirtes2001}. Additional edges can be identified using the fact that the underlying DAG is acyclic and there are no more immoralities. Meek derived $3$ local rules (\emph{Meek rules}), introduced in \cite{verma1992algorithm}, to be recursively applied to identify every such additional edge (see Theorem 3 of \cite{Meek1995a}). The repeated application of \emph{Meek rules} on this partially directed graph with identified immoralities until they can no longer be used yields the essential graph.

\subsection{Interventions and Active Learning}
Given a set of variables $V=\{x_1,...,x_n\}$, an \emph{intervention} on a set $S \subset X$ of the variables is an experiment where the performer forces each variable $s \in S$ to take the value of another independent (from other variables) variable $u$, i.e., $s=u$. This operation, and how it affects the joint distribution is formalized by the \emph{do} operator by Pearl \cite{Pearl2009}. 
 An intervention modifies the causal DAG $D$ as follows: The post intervention DAG $D_{\{S\}}$ is obtained by removing the connections of nodes in $S$ to their parents. 
 The \emph{size of an intervention } $S$ is the number of intervened variables, i.e., $|S|$. Let $S^c$ denote the complement of the set $S$.
 
 CI-based learning algorithms can be applied to $D_{\{S\}}$ to identify the set of removed edges, i.e. parents of $S$ \cite{Spirtes2001}, and the remaining adjacent edges in the original skeleton are declared to be the children. Hence,
    
 (R0) The orientations of the edges of the cut between $S$ and $S^c$ in the original DAG $D$ can be inferred. 
    
    Then, $4$ local Meek rules (introduced in \cite{verma1992algorithm}) are repeatedly applied to the original DAG $D$ with the new directions learnt from the cut to learn more till no more directed edges can be identified. Further application of CI-based algorithms on $D$ will reveal no more information. The Meek rules are given below:
\begin{itemize}[topsep=0pt,itemsep=-1ex,partopsep=1ex,parsep=1ex]
\item[(R1)] ($a-b$) is oriented as ($a\rightarrow b$) if $\exists c$ s.t. $(c\rightarrow a)$ and $(c,b)\notin E$. 
\item[(R2)] ($a-b$) is oriented as ($a\rightarrow b$) if $\exists c$ s.t. $(a\rightarrow c)$ and $(c\rightarrow b)$.
\item[(R3)] ($a-b$) is oriented as ($a\rightarrow b$) if $\exists c,d$ s.t. $(a-c)$,$(a-d)$,$(c\rightarrow b)$,$(d\rightarrow b)$ and $(c,d)\notin E$.
\item[(R4)] ($a-c$) is oriented as ($a\rightarrow c$) if $\exists b,d$ s.t. $(b\rightarrow c)$,$(a-d)$,$(a-b)$,$(d\rightarrow b)$ and $(c,d)\notin E$.
\end{itemize}
The concepts of essential graphs and Markov equivalence classes are extended in \cite{Hauser2012a} to incorporate the role of interventions: Let $\mathcal{I}=\{I_1,I_2,...,I_m\}$, be a set of interventions and let the above process be followed after each intervention. Interventional Markov equivalence class ($\mathcal{I}$ equivalence) of a DAG is the set of DAGs that represent the same set of probability distributions obtained when the above process is applied after every intervention in $\mathcal{I}$. It is denoted by $[D]_{\mathcal{I}}$. 
Similar to the observational case, \emph{$\mathcal{I}$ essential graph} of a DAG $D$ is the graph union of all DAGs in the same $\mathcal{I}$ equivalence class; it is denoted by $\ess_{\mathcal{I}}(D)$. 
 We have the following sequence:
   \begin{align}\label{eqn:learnseq}
     D   \rightarrow \mathrm{CI~learning} \rightarrow \mathrm{Meek~rules}  \rightarrow \ess(D)  \rightarrow I_1 \overset{a}\rightarrow \mathrm{learn~by~R0} \overset{b}\rightarrow 
   \mathrm{Meek~rules} & \nonumber \\
      \rightarrow \ess_{\{I_1\}} (D) \rightarrow I_2 \ldots \rightarrow \ess_{\{I_1,I_2\}} (D) \ldots &
   \end{align}

 Therefore, after a set of interventions ${\cal I}$ has been performed, the essential graph 
 $\ess_{{\cal I}}(D)$ is a graph with some oriented edges that captures all the causal relations we have discovered so far, using 
 ${\cal I}$. Before any interventions happened $\ess (D)$ captures the initially known causal directions. 
 It is known that $\ess_{{\cal I}}(D)$ is a chain graph with chordal chain components. Therefore when all the directed edges are removed, the graph becomes a set of disjoint chordal graphs.

\subsection{Problem Definition}


We are interested in the following question: 
\begin{problem}
Given that all interventions in ${\cal I}$ are of size at most $k < n/2$ variables, i.e., 
for each intervention $I$, $\lvert I \rvert \leq k,\forall I\in\mathcal{I}$, minimize the number of interventions $\lvert {\cal I} \rvert$ such that the partially directed graph with all directions learned so far $\ess_{{\cal I}} (D) =D$. 
\end{problem}

The question is the design of an algorithm that computes the small set of interventions ${\cal I}$ given $\ess(D)$. Note, of course, that the unknown directions of the edges $D$ are not available to the algorithm. One can view the design of ${\cal I}$ as an active learning process to find $D$ from the essential graph $\ess(D)$. $\ess(D)$ is a chain graph with undirected chordal components and it is known that interventions on one chain components do not affect the discovery process of directed edges in the other components \cite{Hauser2012b}. So we will assume that $\ess(D)$ is undirected and a chordal graph to start with. Our notion of algorithm does not consider the time complexity (of statistical algorithms involved) of steps $a$ and $b$ in (\ref{eqn:learnseq}). Given $m$ interventions, we only consider efficiently computing $I_{m+1}$ using (possibly) the graph $\ess_{\{I_{1}, \ldots I_{m} \}}$. We consider the following three classes of algorithms:
\begin{enumerate}[topsep=0pt,itemsep=-1ex,partopsep=1ex,parsep=1ex]
 \item \textit{Non-adaptive algorithm:} The choice of ${\cal I}$ is fixed prior to the discovery process.
 \item  \textit{Adaptive algorithm:} At every step $m$, the choice of $I_{m+1}$ is a deterministic function of $\ess_{\{I_{1}, \ldots I_{m} \}}(D)$.
 \item \textit{Randomized adaptive algorithm:} At every step $m$,  the choice of $I_{m+1}$ is a random function of $\ess_{\{I_{1}, \ldots I_{m} \}}(D)$.
\end{enumerate}

The problem is different for complete graphs versus more general chordal graphs since rule R$1$ becomes applicable when the graph is not complete. Thus we give a separate treatment for each case. First, we provide algorithms for all three cases for learning the directions of complete graphs $\ess(D)= K_n$ (undirected complete graph) on $n$ vertices. Then, we generalize to chordal graph skeletons and provide a novel adaptive algorithm with upper and lower bounds on its performance. 

The missing proofs of the results that follow can be found in the Appendix.


\section{Complete Graphs}
\label{sec:complete}
In this section, we consider the case where the skeleton we start with, i.e. $\ess(D)$, is an undirected complete graph (denoted $K_n$). It is known that at any stage in (\ref{eqn:learnseq}) starting from $\ess(D)$, rules R$1$, R$3$ and R$4$ do not apply. Further, the underlying DAG $D$ is a directed clique. The directed clique is characterized by an ordering $\sigma$ on $[1:n]$ such that, in the subgraph induced by $\sigma(i),\sigma(i+1) \ldots \sigma(n)$, $\sigma(i)$ has no incoming edges. Let $D$ be denoted by $\vec{K}_n(\sigma)$ for some ordering $\sigma$. Let $[1:n]$ denote the set $\{1,2 \ldots n\}$. We need the following results on a separating system for our first result regarding adaptive and non-adaptive algorithms for a complete graph.

\subsection{Separating System}
\begin{definition}\cite{Katona1966,Wegener1979}
 An $(n,k)$-\textit{separating} system on an $n$ element set $[1:n]$ is a set of subsets ${\cal S}=\{S_1,S_2 \ldots S_m \}$ such that $\lvert S_i \rvert \leq k$ and for every pair $i,j$ there is a subset $S \in {\cal S}$ such that either $i \in S,~ j \notin S$ or $j \in S,~ i \notin S$. If a pair $i,j$ satisfies the above condition with respect to ${\cal S}$, then ${\cal S}$ is said to separate the pair $i,j$. Here, we consider the case when $k<n/2$
\end{definition}

In \cite{Katona1966}, Katona gave an $(n,k)$-separating system together with a lower bound on $\lvert \cal S\rvert$. In \cite{Wegener1979}, Wegener gave a simpler argument for the lower bound and also provided a tighter upper bound than the one in \cite{Katona1966}. In this work, we give a different construction below where the separating system size is at most$\lceil \log_{\lceil n/k \rceil} n\rceil$ larger than the construction of Wegener. However, our construction has a simpler description. 

\begin{lemma}\label{label}
   There is a labeling procedure that produces distinct $\ell$ length labels for all elements in $[1:n]$ using letters from the integer alphabet $\{0,1 \ldots a \}$ where $\ell=\lceil \log_{a} n \rceil$. Further, in every digit (or position), any integer letter is used at most $\lceil n/a \rceil$ times.
\end{lemma}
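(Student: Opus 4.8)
The plan is to exhibit an explicit recursive labelling and analyse its column statistics by induction. Throughout write $[1:n]=\{0,1,\dots,n-1\}$, take the alphabet to be $\mathbb{Z}_{a+1}=\{0,1,\dots,a\}$ with addition mod $a+1$, and note $\ell=\lceil\log_a n\rceil$ forces $a^{\ell}\ge n$. For a block $S$ of consecutive integers with $|S|=m\le a^{\ell'}$ and an offset $\rho\in\mathbb{Z}_{a+1}$, I define $\mathrm{Lab}(S,\ell',\rho)$, a length-$\ell'$ labelling, by: if $\ell'=1$ (so $m\le a$), give the $t$-th element of $S$ the digit $(\rho+t)\bmod(a+1)$; if $\ell'\ge 2$, write $m=aq+r$ with $0\le r<a$, split $S$ into $a$ consecutive sub-blocks $S_0,\dots,S_{a-1}$ with $|S_g|=q+1$ for $g<r$ and $|S_g|=q$ otherwise (so $|S_g|\le\lceil m/a\rceil\le a^{\ell'-1}$), give every element of $S_g$ first digit $(\rho+g)\bmod(a+1)$, and set its remaining $\ell'-1$ digits by $\mathrm{Lab}(S_g,\ell'-1,\rho+g)$. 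The labelling of $[1:n]$ is $\mathrm{Lab}([1:n],\ell,0)$. Equivalently, unrolled: if $(g_1,\dots,g_\ell)$ is the sequence of sub-block indices encountered while repeatedly locating an element inside these balanced $a$-way splits, then its $j$-th digit is $(g_1+\cdots+g_j)\bmod(a+1)$ — the label is the vector of prefix sums of this ``balanced mixed-radix'' expansion, and this cyclic-shift twist is the whole point.

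The routine parts are quick. The length is $\ell$ by construction, and $a^{\ell'}\ge|S|$ persists along the recursion since $a^{\ell-1}\ge\lceil n/a\rceil$. For distinctness: at $\ell'=1$ the digits $(\rho+t)\bmod(a+1)$ for $t<m\le a<a+1$ are distinct; for $\ell'\ge 2$ elements in different sub-blocks get different first digits (the $a$ values $(\rho+g)\bmod(a+1)$ are distinct) and elements in the same sub-block are separated by induction.

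For the balance bound I would introduce ``profile functions'' $F_j(m,\cdot)\colon\mathbb{Z}_{a+1}\to\mathbb{Z}_{\ge 0}$ mirroring the construction ($F_1(m,u)=q+[u<r]$ for $u<a$, $F_1(m,a)=0$, where $m=aq+r$; and $F_j(m,u)=\sum_{g=0}^{a-1}F_{j-1}(|S_g(m)|,(u-g)\bmod(a+1))$ for $j\ge 2$), and prove by induction on $\ell'$ that in $\mathrm{Lab}(S,\ell',\rho)$ the number of elements whose $j$-th digit equals $s$ is exactly $F_j\!\big(|S|,(s-\rho)\bmod(a+1)\big)$. A short separate induction on $j$ gives: (i) $\sum_u F_j(m,u)=m$; (ii) $F_j\ge 0$; (iii) the one-step difference $F_j(m{+}1,\cdot)-F_j(m,\cdot)$ is a single $0/1$ indicator, so $F_j$ is nondecreasing in $m$. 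Granting these, for $j\ge 2$ the symbol-$s$ count in $\mathrm{Lab}([1:n],\ell,0)$ equals $\sum_{g=0}^{a-1}F_{j-1}\!\big(|S_g|,(s-g)\bmod(a+1)\big)$; by (iii) this is at most $\sum_{g=0}^{a-1}F_{j-1}\!\big(\lceil n/a\rceil,(s-g)\bmod(a+1)\big)$, and since the $a$ shifts $(s-g)\bmod(a+1)$ are \emph{distinct} this is a sum over $a$ of the $a{+}1$ arguments of $F_{j-1}(\lceil n/a\rceil,\cdot)$, hence at most $\lceil n/a\rceil$ by (i) and (ii); the $j=1$ case is immediate from the formula for $F_1$. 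This is exactly the claimed bound.

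The difficulty is concentrated in that last step. A naive union bound over the $a$ sub-blocks accumulates the per-sub-block ceilings into roughly $n/a+a$ extra per level, which is fatal because the slack between $n/a$ and $\lceil n/a\rceil$ is below $1$; so some structure is unavoidable. The rotations $\rho+g$ are precisely what provides it — they make the $a$ sub-blocks feed any fixed digit value at $a$ \emph{different} cyclic positions of the recursive profile, turning ``$a$ copies of the maximum'' into ``one full row-sum minus a nonnegative remainder,'' which is tight. The only genuine work is therefore the bookkeeping for the profiles $F_j$ and their one-step increments; length, distinctness, and the final inequality are mechanical once the invariant and properties (i)--(iii) are in place.
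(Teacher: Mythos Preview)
Your proposal is correct and takes a genuinely different route from the paper. The paper builds each digit position \emph{independently}: for digit $d$ it lays down the periodic base-$a$ pattern on the first $\lfloor n/a^{d}\rfloor a^{d}$ positions, fills the leftover $r_d$ positions with shorter runs of length $\lceil r_d/a\rceil$, and then adds $1$ to every symbol past position $a^{d-1}\lfloor n/a^{d-1}\rfloor$; distinctness is argued by induction on $n$ (the top row separates the head block from the tail, and the tail is a smaller instance), while the $\lceil n/a\rceil$ balance claim is dispatched as ``a straightforward calculation'' without details. Your construction is instead top-down recursive: one balanced $a$-way split per level, with the key device being the cyclic offset $\rho\mapsto\rho+g$ passed to each child. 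This yields different labels in general (e.g.\ already for $a=2$, $n=5$), and a different proof of balance: the rotation forces the $a$ sub-blocks to populate any fixed symbol at $a$ \emph{distinct} residues of the recursive profile $F_{j-1}$, so monotonicity plus $\sum_{u}F_{j-1}(\lceil n/a\rceil,u)=\lceil n/a\rceil$ and nonnegativity give the bound immediately. What the paper's approach buys is an explicit per-digit formula with almost no auxiliary machinery; what yours buys is a clean, fully rigorous balance argument that names the one structural idea (the rotation) doing the work, at the price of introducing the profiles $F_j$ and checking the one-step increment property (which, as you note, follows since exactly the sub-block with index $r$ grows when $m\mapsto m{+}1$).
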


Once we have a set of $n$ string labels as in Lemma \ref{label}, our separating system construction is straightforward.
\begin{theorem}\label{const}
 Consider an alphabet ${\cal A}=[0:\lceil \frac{n}{k} \rceil]$ of size $\lceil \frac{n}{k} \rceil+1$ where $k<n/2$. Label every element of an $n$ element set using a distinct string of letters from ${\cal A}$ of length $ \ell= \lceil \log_{\lceil \frac{n}{k} \rceil} n \rceil$ using the procedure in Lemma \ref{label} with $a=\lceil \frac{n}{k} \rceil$. For every $1 \leq i \leq \ell$ and $1 \leq j \leq \lceil \frac{n}{k} \rceil $, choose the subset $S_{i,j}$ of vertices whose string's $i$-th letter is $j$. The set of all such subsets ${\cal S} = \{S_{i,j}\}$ is a $k$-separating system on $n$ elements and $\lvert {\cal S} \rvert \leq (\lceil \frac{n}{k} \rceil ) \lceil \log_{\lceil \frac{n}{k} \rceil} n \rceil $.
\end{theorem}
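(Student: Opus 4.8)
The plan is to verify the two claimed properties of the construction—that $\mathcal{S}$ separates every pair, and that $|\mathcal{S}|$ is bounded as stated—by directly unwinding the definitions, leaning entirely on Lemma \ref{label} for the existence of the labeling. First I would fix notation: write $a = \lceil n/k\rceil$ and $\ell = \lceil \log_a n\rceil$, and for each element $v\in[1:n]$ let $w(v) = (w_1(v),\dots,w_\ell(v))$ be its length-$\ell$ string over the alphabet $\{0,1,\dots,a\}$ produced by Lemma \ref{label}. The set $S_{i,j}$ for $1\le i\le\ell$, $1\le j\le a$ is defined as $\{v : w_i(v) = j\}$.

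For the separating property, take any pair $p,q$ with $p\neq q$. Since the labels are distinct, there is some position $i$ with $w_i(p)\neq w_i(q)$. At least one of $w_i(p), w_i(q)$ is nonzero—say $w_i(p) = j \ge 1$ (if both are nonzero we just pick $p$; if only one is, we relabel so it is $p$). Then $p\in S_{i,j}$ while $q\notin S_{i,j}$ because $w_i(q)\neq j$, so the pair is separated. The one subtlety worth stating explicitly is why we are allowed to only index subsets by letters $j\ge 1$ and not $j=0$: if the distinguishing position has one label equal to $0$, the other must be some $j\ge1$, and that element lies in $S_{i,j}$ while the $0$-labelled one does not. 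So $S_{i,j}$ for $j\in\{1,\dots,a\}$ already suffice.

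For the size bound on each subset, by the second clause of Lemma \ref{label} every letter appears at most $\lceil n/a\rceil$ times in position $i$; since $a = \lceil n/k\rceil \ge n/k$, we get $\lceil n/a\rceil \le \lceil k\rceil = k$, so $|S_{i,j}|\le k$ as required for a $k$-separating system. (I would spell out the inequality $\lceil n/\lceil n/k\rceil\rceil \le k$: it follows from $\lceil n/k\rceil \ge n/k$ hence $n/\lceil n/k\rceil \le k$, and $k$ is an integer.) Finally, the number of subsets is at most $\ell \cdot a = a\,\lceil \log_a n\rceil = \lceil n/k\rceil\,\lceil \log_{\lceil n/k\rceil} n\rceil$, which is the claimed cardinality bound.

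There is no serious obstacle here—the theorem is essentially a packaging of Lemma \ref{label}—so the main thing to get right is the small case analysis around the zero letter, and making sure the ceiling arithmetic $\lceil n/\lceil n/k\rceil\rceil\le k$ is argued rather than asserted. I would also remark that we never used position $i$'s letter being $0$ as an index, so some $S_{i,j}$ may be empty or the collection may contain fewer than $\ell a$ nonempty sets, which only helps the bound; and that $k<n/2$ guarantees $a = \lceil n/k\rceil \ge 3$, so the alphabet genuinely has more than two symbols and Lemma \ref{label} applies with $a\ge 2$ as it requires.
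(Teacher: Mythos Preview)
Your proof is correct and follows essentially the same approach as the paper's: use distinctness of the labels from Lemma~\ref{label} to find a differing position, pick the nonzero letter there to get the separating set, and use the per-position frequency bound from Lemma~\ref{label} together with $\lceil n/\lceil n/k\rceil\rceil\le k$ for the size bound. You are more explicit than the paper about the zero-letter case and the ceiling inequality, which only improves the exposition.
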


\subsection{Adaptive algorithms: Equivalence to a Separating System}
  Consider any non-adaptive algorithm that designs a set of interventions ${\cal I}$, each of size at most $k$, to discover $\vec{K}_n(\sigma)$. ${\cal I}$ has to be a separating system in the worst case over all $\sigma$. This is already known. Now, we prove the necessity of a separating system for deterministic adaptive algorithms in the worst case.
  
  \begin{theorem}\label{equivsep}
     Let there be an adaptive deterministic algorithm $A$ that designs the set of interventions ${\cal I}$ such that the final graph learnt $\ess_{{\cal I}}(D)= \vec{K}_n(\sigma)$ for any ground truth ordering $\sigma$ starting from the initial skeleton $\ess(D) =K_n$. Then, there exists a $\sigma$ such that $A$ designs an ${\cal I}$ which is a separating system.
  \end{theorem}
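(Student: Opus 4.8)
The plan is to run an adversary argument that simultaneously drives the deterministic algorithm $A$ and constructs a worst‑case ordering $\sigma$. A naive ``swap two consecutive vertices of $\sigma$'' argument only shows that consecutive pairs of $\sigma$ must be separated by $\mathcal I$, which is strictly weaker than the desired conclusion (recall the theorem asserts the existence of \emph{one} bad $\sigma$, not a property of all $\sigma$). Instead the adversary maintains an \emph{ordered block partition} $\mathcal B_t=(B_1<B_2<\cdots<B_{r_t})$ of $[1:n]$ with the invariant that, after answering the first $t$ interventions, the learned graph $\ess_{\{I_1,\dots,I_t\}}(D)$ is exactly the partially directed clique in which every edge between distinct blocks $B_i,B_j$ with $i<j$ is oriented $B_i\!\to\!B_j$ and every edge inside a block is undirected; moreover every total order $\sigma$ refining $\mathcal B_t$ yields these same responses from (R0) and the Meek rules, so $A$'s first $t$ interventions coincide for all such $\sigma$. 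Initially $\mathcal B_0=([1:n])$ and the invariant is trivial since $\ess(D)=K_n$.

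The key steps are then: (1) \emph{Refinement.} When $A$ requests $I_{t+1}=S$ with $|S|\le k$, the adversary refines each block $B$ into $B\setminus S$ followed by $B\cap S$ (discarding empty parts) to get $\mathcal B_{t+1}$, and answers as dictated by any $\sigma$ refining $\mathcal B_{t+1}$; one checks this preserves the invariant, since the cut (R0) exposes inside each old block only the relative order of $B\setminus S$ and $B\cap S$ and across old blocks only already‑known orientations, while on a complete skeleton rules R1, R3, R4 never fire and R2 would need an intermediate vertex lying in a strictly later and a strictly earlier block simultaneously, which is impossible. Hence no intra‑block edge of $\mathcal B_{t+1}$ is oriented. (2) \emph{Termination forces singletons.} When $A$ halts after $m$ steps, $\ess_{\mathcal I}(D)$ must be fully oriented; by the invariant it equals the partially directed graph of $\mathcal B_m$, so every block of $\mathcal B_m$ is a singleton and $\mathcal B_m$ pins down a unique ordering $\sigma^\star$ — otherwise two refinements would give $A$ an identical transcript but distinct target DAGs $\vec K_n(\cdot)$, contradicting correctness. (3) \emph{Transcript equals the run on $\sigma^\star$.} Since $\sigma^\star$ refines every $\mathcal B_t$, the adversary's answers are exactly those $A$ would receive with ground truth $\sigma^\star$, so by determinism $A$ on input $\sigma^\star$ outputs precisely $\mathcal I=\{I_1,\dots,I_m\}$. (4) \emph{Separating property.} For any pair $\{u,v\}$, they lie in a common block of $\mathcal B_0$ and in different (singleton) blocks of $\mathcal B_m$, so there is a first step $t$ that splits them; just before it they shared a block $B$, and the split sends one of them into $B\cap S\subseteq S$ and the other into $B\setminus S\subseteq S^c$, so $I_t$ separates $\{u,v\}$. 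As all $|I_t|\le k$, $\mathcal I$ is an $(n,k)$‑separating system, which proves the theorem with $\sigma=\sigma^\star$.

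The main obstacle is step (1): showing a single intervention reveals \emph{nothing beyond} the refined block order. This rests on the observation that (R0) only exposes cut edges, hence cannot order two vertices that stay on the same side of $S$, together with the fact that on a complete skeleton the only applicable Meek rule is R2, whose transitivity is blocked precisely because within the current block structure no vertex lies strictly between two members of the same block. A secondary bookkeeping point is that the adversary is free to choose, per block, which of $B\setminus S$, $B\cap S$ comes first (either choice preserves the invariant), and one must argue that whatever choices are made, the resulting $\sigma^\star$ is a legitimate ground truth on which $A$'s actual interventions coincide with the adversary transcript.
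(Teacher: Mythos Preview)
Your proposal is correct and follows essentially the same adversary argument as the paper's proof. The paper maintains the same invariant via ``tags'' $C_i^{(m-1)}=\{p:i\in I_p,\ p\le m-1\}$ (so your blocks are exactly the tag equivalence classes), refines each class into $I_m\cap C$ before $I_m^c\cap C$ (the opposite of your convention, but as you note either order works), and gives the same R2-blocking argument to show no intra-block edge gets oriented; your presentation via an explicit ordered block partition and the explicit identification of $\sigma^\star$ as the final refinement is a slightly cleaner packaging of the identical idea.
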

  The theorem above is independent of the individual intervention sizes. Therefore, we have the following theorem, which is a direct corollary of Theorem \ref{equivsep}:   
  \begin{theorem}
   In the worst case over $\sigma$, any adaptive or a non-adaptive deterministic algorithm on the DAG $\vec{K}_n(\sigma)$ has to be such that $\frac{n}{k}  \log_{\frac{ne}{k}} n \leq \lvert {\cal I} \rvert$. There is a feasible ${\cal I}$ with $\lvert {\cal I} \rvert \leq \lceil (\frac{n}{k} \rceil-1)  \lceil \log_{\lceil \frac{n}{k} \rceil} n \rceil$
  \end{theorem}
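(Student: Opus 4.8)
The plan is to establish the lower and upper bounds separately. The lower bound will come from Theorem~\ref{equivsep} together with a counting bound on the minimum size of an $(n,k)$-separating system, and the upper bound will come from verifying that the separating system of Theorem~\ref{const} is a feasible (non-adaptive, hence also adaptive) choice of ${\cal I}$.

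For the lower bound, I would start from Theorem~\ref{equivsep}: for any deterministic adaptive algorithm $A$ there is a ground-truth ordering $\sigma$ forcing the interventions ${\cal I}$ produced by $A$ to be an $(n,k)$-separating system, so it is enough to lower bound $|{\cal S}|$ over all $(n,k)$-separating systems ${\cal S}=\{S_1,\dots,S_m\}$. Encode each element $i\in[1:n]$ by its membership signature $\chi_i\in\{0,1\}^m$ with $(\chi_i)_j=1$ iff $i\in S_j$; the separating property says the $\chi_i$ are distinct, and $\sum_i \mathrm{wt}(\chi_i)=\sum_j|S_j|\le mk$. Thus $n$ distinct length-$m$ binary strings must have total Hamming weight at most $mk$; bounding the number of strings of weight at most $w$ by $\sum_{i\le w}\binom{m}{i}\le(em/w)^w$ and taking $w\approx mk/n$ forces $(ne/k)^{mk/n}\gtrsim n$, i.e.\ $m\ge\frac{n}{k}\log_{ne/k}n$. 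This is the Katona--Wegener bound, and by the reduction it is a worst-case lower bound for every deterministic (adaptive or non-adaptive) algorithm.

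For the upper bound, I would take ${\cal I}={\cal S}$ to be the separating system of Theorem~\ref{const}: each of its sets $S_{i,j}$ has at most $\lceil n/\lceil n/k\rceil\rceil\le k$ elements, so it is a legal intervention, and $|{\cal S}|\le\lceil n/k\rceil\lceil\log_{\lceil n/k\rceil}n\rceil$. For any edge $\{a,b\}$ of $K_n$ there is, by the separating property, some $S\in{\cal S}$ containing exactly one of $a,b$; intervening on $S$ and applying R0 to the cut between $S$ and $S^c$ reveals the orientation of $\{a,b\}$. Since every edge is oriented this way, $\ess_{{\cal I}}(\vec{K}_n(\sigma))=\vec{K}_n(\sigma)$ for all $\sigma$, so ${\cal I}$ is feasible. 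The stated bound has $\lceil n/k\rceil-1$ rather than $\lceil n/k\rceil$ in front: I would recover this by a sharper accounting of the construction, noting that at each of the $\lceil\log_{\lceil n/k\rceil}n\rceil$ coordinates one candidate subset can be dropped (a pair of labels differing only in that coordinate can be separated without it, after fixing the labels of Lemma~\ref{label} so that the dropped letter never plays such a role), cutting the per-coordinate cost from $\lceil n/k\rceil$ to $\lceil n/k\rceil-1$.

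The step I expect to be the main obstacle is getting the constant in the lower bound exactly right: the crude route (using Markov's inequality to say half the signatures are light) loses a factor of about two and yields only an $\frac{n}{2k}\log n$-type bound, so one must instead reason directly about the total weight of the $n$ lightest strings, along the lines of Wegener's argument, to land precisely on $\frac{n}{k}\log_{ne/k}n$. The upper bound, by contrast, is essentially immediate from Theorem~\ref{const} plus the already-known sufficiency of separating systems; the only mild care needed there is the $-1$ refinement.
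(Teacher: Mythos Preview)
Your overall plan matches the paper's proof exactly: invoke Theorem~\ref{equivsep} to reduce the worst-case adaptive lower bound to a lower bound on $(n,k)$-separating systems, and then quote the Katona--Wegener bounds for both directions. The paper's own proof is in fact just the two-line citation ``By Theorem~\ref{equivsep}, we need a separating system in the worst case and the lower and upper bounds are from \cite{Wegener1979,Katona1966}.'' Your sketch of the lower bound (distinct binary signatures of total weight $\le mk$, then the $\sum_{i\le w}\binom{m}{i}\le (em/w)^w$ estimate) is precisely Wegener's argument, so that part is fine and just a more explicit version of what the paper cites.

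The one place where your proposal departs from the paper and is not quite right is the upper bound. The paper does \emph{not} obtain the $(\lceil n/k\rceil-1)\lceil\log_{\lceil n/k\rceil}n\rceil$ upper bound from its own Theorem~\ref{const}; it cites Wegener's construction for that, and explicitly says its own construction is $\lceil\log_{\lceil n/k\rceil}n\rceil$ larger. Your attempted ``$-1$ refinement'' of Theorem~\ref{const}---dropping one subset per coordinate---does not work as stated: the construction of Theorem~\ref{const} uses an alphabet $\{0,1,\dots,\lceil n/k\rceil\}$ of size $\lceil n/k\rceil+1$ and already omits the sets for letter $0$; if you drop a second letter (say $\lceil n/k\rceil$), then a pair whose labels agree everywhere except at coordinate $i$, where one label has $0$ and the other has $\lceil n/k\rceil$, is no longer separated. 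To get the $-1$ you need to start from an alphabet of size $\lceil n/k\rceil$ (not $\lceil n/k\rceil+1$) with the balanced-usage property, which is exactly Wegener's construction rather than a tweak of Lemma~\ref{label}. So either cite \cite{Wegener1979} directly for the upper bound, as the paper does, or be content with the slightly weaker $\lceil n/k\rceil\lceil\log_{\lceil n/k\rceil}n\rceil$ from Theorem~\ref{const}.
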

  \begin{proof}
       By Theorem \ref{equivsep}, we need a separating system in the worst case and the lower and upper bounds are from \cite{Wegener1979,Katona1966}.
  \end{proof}
  
\subsection{Randomized Adaptive Algorithms}
In this section, we show that that total number of variable accesses to fully identify the complete causal DAG is $\Omega(n)$.
\begin{theorem}
\label{thm:infoLower}
To fully identify a complete causal DAG $\vec{K}_n(\sigma)$ on $n$ variables using size-$k$ interventions, $\frac{n}{2k}$ interventions are necessary. Also, the total number of variables accessed is at least $\frac{n}{2}$.
\end{theorem}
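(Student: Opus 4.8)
The two claims are equivalent up to a factor of $k$, since each intervention accesses at most $k$ variables; so it suffices to show that in the worst case the union $U=\bigcup_m I_m$ of all intervened sets must have size at least $\lfloor n/2\rfloor$. The plan is an adversary (indistinguishability) argument based on transposing two adjacent elements of the ground truth ordering. Recall that when the skeleton is $K_n$ the essential graph $\ess(K_n)$ is completely undirected (no immoralities), so a deterministic adaptive algorithm is just a function that maps each current interventional essential graph to the next size-$\le k$ intervention.

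Fix a deterministic adaptive algorithm and a ground truth ordering $\sigma$, run it on $\vec{K}_n(\sigma)$, and let $U$ be the set of variables ever intervened upon. Suppose for contradiction that $\lvert U\rvert<\lfloor n/2\rfloor$; then $\lvert V\setminus U\rvert>\lceil n/2\rceil$, so by pigeonhole two variables $i,j\notin U$ are consecutive in $\sigma$, say $\sigma(i)+1=\sigma(j)$. Let $\sigma'$ be $\sigma$ with $i$ and $j$ swapped. The key lemma to establish is that the entire execution — the sequence of interventions and of interventional essential graphs — is identical on inputs $\vec{K}_n(\sigma)$ and $\vec{K}_n(\sigma')$. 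I would prove this by induction on the number of interventions performed: the initial essential graph is $K_n$ in both cases; assuming the executions agree through the first $m$ steps, the intervention $I_{m+1}$ is the same deterministic function of the common current essential graph, and since $I_{m+1}\subseteq U$ we have $i,j\notin I_{m+1}$. For every $v\in I_{m+1}$ and every $w\notin I_{m+1}$ the orientation of the cut edge learned by R0 is the same under $\sigma$ and $\sigma'$: if $w\notin\{i,j\}$ this is because $\sigma$ and $\sigma'$ agree off $\{i,j\}$, and if $w\in\{i,j\}$ it is precisely here that adjacency of $i$ and $j$ in $\sigma$ is used — no variable can have $\sigma$-rank strictly between $\sigma(i)$ and $\sigma(j)$, so for any $v\ne i,j$, $\sigma(v)<\sigma(i)\iff\sigma(v)<\sigma(j)$. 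Hence the cut learned by R0 is identical, and applying the (deterministic) Meek rules produces the same essential graph after $I_{m+1}$.

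Consequently the algorithm terminates with the same partially directed graph on $\vec{K}_n(\sigma)$ and on $\vec{K}_n(\sigma')$; but these DAGs differ on the orientation of $(i,j)$, so the algorithm fails to fully identify at least one of them. Therefore any correct algorithm satisfies $\lvert U\rvert\ge\lfloor n/2\rfloor$ for every $\sigma$, giving at least $\lfloor n/2\rfloor$ total variable accesses and $\lvert\mathcal{I}\rvert\ge\lfloor n/2\rfloor/k$, i.e. essentially $\frac{n}{2k}$ interventions. For a randomized adaptive algorithm that is correct with probability one, fixing the coin tosses yields a deterministic adaptive algorithm; if with positive probability a run on some $\sigma$ accessed fewer than $\lfloor n/2\rfloor$ variables, then — since there are only finitely many candidate transposed orderings $\sigma'$ — with positive probability the induced deterministic run fails on some fixed $\sigma'$, contradicting correctness, so the bounds hold almost surely and in expectation. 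The main obstacle is getting the inductive indistinguishability step exactly right in the adaptive setting: one must use ``$i,j$ are never intervened upon'' to keep every $I_{m+1}$ out of $\{i,j\}$, and use $\sigma$-adjacency (rather than the weaker ``no intervened variable lies between them'', which would suffice only for a non-adaptive argument) to make the transposition genuinely undetectable while still allowing the interventions to be chosen adaptively.
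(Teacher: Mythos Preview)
Your argument is correct and rests on the same core observation as the paper's proof: if two variables that are \emph{adjacent in the ordering} are both never intervened upon, then swapping them changes only the direction of the single edge between them and is therefore undetectable by R0 followed by Meek rules. The paper packages this as a fixed adversary---it pre-groups the $n$ variables into $n/2$ consecutive pairs $C_1,\dots,C_{n/2}$ and notes that the algorithm must touch at least one vertex of each pair---whereas you package it as a per-instance indistinguishability argument: fix any $\sigma$, and if fewer than $\lfloor n/2\rfloor$ variables are accessed, pigeonhole on the $\sigma$-path forces two un-intervened vertices to be $\sigma$-adjacent, and your inductive simulation of the adaptive run on $\sigma$ versus the transposed $\sigma'$ does the rest. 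Your version buys a slightly stronger conclusion (the bound holds for every $\sigma$, not just the worst case), treats adaptivity more explicitly, and extends cleanly to randomized algorithms via the coin-fixing step you outline; the paper's version is terser but relies on the same transposition insight.
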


The lower bound in Theorem \ref{thm:infoLower} is information theoretic. We now give a randomized algorithm that requires $O(\frac{n}{k}\log\log k)$ experiments in expectation. We provide a straightforward generalization of \cite{Hu2014}, where the authors gave a randomized algorithm for unbounded intervention size. 
\begin{theorem} \label{thm:randomized}
Let $\ess(D)$ be $K_n$ and the experiment size $k=n^r$ for some $0<r<1$. Then there exists a randomized adaptive algorithm which designs an $\mathcal{I}$ such that $\ess_{\mathcal{I}}(D)=D$ with probability polynomial in $n$, and $\lvert\mathcal{I}\rvert=\mathcal{O}(\frac{n}{k}\log\log(k))$ in expectation.
\end{theorem}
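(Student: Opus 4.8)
The plan is to invoke the randomized adaptive algorithm of \cite{Hu2014} for the unbounded-intervention-size case essentially as a black box, and to \emph{implement} each of its (possibly large) interventions by a short batch of size-$k$ interventions. Recall that \cite{Hu2014} produces, starting from $\ess(D)=K_n$, a random sequence of adaptive interventions $S^{(1)},S^{(2)},\dots$ of arbitrary size after which $D$ is fully recovered, and that it uses only $T=\mathcal{O}(\log\log n)$ interventions with the stated high-probability/expectation guarantee. Our algorithm will run this procedure internally while maintaining the partially oriented graph $\tilde{\ess}_t$ that \cite{Hu2014} would have seen after its first $t$ interventions. Whenever the procedure calls for an intervention on a set $S$, we instead fix an arbitrary partition $S=S_1\cup\dots\cup S_p$ into $p=\lceil\lvert S\rvert/k\rceil=\mathcal{O}(n/k)$ parts of size at most $k$ and perform one real intervention on each $S_j$; from the outcomes we extract the cut between $S$ and $S^c$, feed exactly that to the internal procedure to form $\tilde{\ess}_{t+1}$, and let it pick its next simulated intervention.

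The main thing to verify is that this substitution preserves correctness. First, splitting an intervention can only increase what is learned: by rule (R0), intervening on $S_j$ reveals every edge of the cut between $S_j$ and $S_j^c$, and since $S_j\subseteq S$ gives $S_j^c\supseteq S^c$, the union over $j$ of these cuts contains every edge between $S$ and $S^c$ --- exactly the cut that a single intervention on $S$ would reveal --- plus, possibly, edges internal to $S$. Consequently the internal procedure always receives precisely the information it expects, so it makes the same random choices and halts after the same $T$ rounds; by the guarantee of \cite{Hu2014}, $\tilde{\ess}_T=D$. On the other hand, what our real interventions orient is the Meek closure of the union of all cuts actually observed; this union contains $\bigcup_{t\le T}\mathrm{cut}\big(S^{(t)},(S^{(t)})^c\big)$, and since the Meek closure is monotone in the set of oriented edges (for a complete skeleton only (R2) ever fires, so the closure is just transitive closure), the graph we learn contains $\tilde{\ess}_T=D$; being also contained in $D$, it equals $D$. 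Hence $\ess_{\mathcal{I}}(D)=D$ with the probability promised by \cite{Hu2014}.

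For the count, the internal procedure uses $T=\mathcal{O}(\log\log n)$ interventions (in expectation), each expanded into at most $\lceil\lvert S^{(t)}\rvert/k\rceil=\mathcal{O}(n/k)$ size-$k$ interventions, so $\lvert\mathcal{I}\rvert=\mathcal{O}\!\left(\frac{n}{k}\log\log n\right)$ in expectation. The hypothesis $k=n^{r}$ with $r\in(0,1)$ constant enters exactly here: $\log\log k=\log\log n+\log r=\Theta(\log\log n)$, so $\mathcal{O}\!\left(\frac{n}{k}\log\log n\right)=\mathcal{O}\!\left(\frac{n}{k}\log\log k\right)$, the claimed bound.

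The step I expect to be the only real obstacle is making rigorous that feeding \cite{Hu2014} the extra orientations coming from the split does not invalidate its analysis; the fix above --- have the algorithm track its own simulated state $\tilde{\ess}_t$ and hand the internal procedure only the cut it requested, treating the surplus purely as a recorded bonus --- should settle this cleanly. As an alternative that avoids even the passage from $\log\log n$ to $\log\log k$, one can partition $[n]$ into $\lceil n/k\rceil$ groups of size $\le k$, run \cite{Hu2014} on each induced sub-clique in turn ($\mathcal{O}(\log\log k)$ interventions per group, each a subset of that group), and then spend one extra size-$k$ intervention per group: intervening on a group $G_i$ reveals, for each $v\in G_i$, its rank among $[n]\setminus G_i$, which combined with the already-learned order inside $G_i$ determines the global rank of $v$; this again costs $\mathcal{O}\!\left(\frac{n}{k}\log\log k\right)$ interventions.
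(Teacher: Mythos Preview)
Your proposal is correct, and your main approach is genuinely different from the paper's. The paper partitions $[n]$ into $\lceil n/k\rceil$ blocks of size $k$, spends one intervention per block to orient all inter-block edges, and then runs the algorithm of \cite{Hu2014} separately on each size-$k$ sub-clique; this yields $\mathcal{O}(\tfrac{n}{k}\log\log k)$ interventions directly, and the hypothesis $k=n^{r}$ is used only in the union bound over the $\lceil n/k\rceil$ blocks to push the per-block failure probability $k^{-c}$ down to an overall $1/\mathrm{poly}(n)$. Your main route instead treats \cite{Hu2014} as a black box on the \emph{whole} $K_n$ and splits each of its $\mathcal{O}(\log\log n)$ requested interventions into $\mathcal{O}(n/k)$ size-$k$ pieces; here the high-probability guarantee comes for free from a single invocation of \cite{Hu2014}, and the hypothesis $k=n^{r}$ is used only to rewrite $\log\log n$ as $\Theta(\log\log k)$. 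Your careful point about feeding the internal procedure only the requested cut (recording the surplus separately) is exactly what is needed to make the simulation faithful, and the monotonicity argument via (R2) is sound. The alternative you sketch at the end---learn each block's internal order via \cite{Hu2014}, then one intervention per block for the cross edges---is essentially the paper's argument with the two phases swapped.
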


%
%
%
%



\section{General Chordal Graphs}
\label{sec:chordal}

In this section, we turn to interventions on a general DAG $G$. After the initial stages in (\ref{eqn:learnseq}), $\ess(G)$ is a chain graph with chordal chain components. There are no further immoralities throughout the graph. In this work, we focus on one of the chordal chain components. Thus the DAG $D$ we work on is assumed to be a directed graph with no immoralities and whose skeleton $\ess(D)$ is chordal. We are interested in recovering $D$ from $\ess(D)$ using interventions of size at most $k$ following (\ref{eqn:learnseq}).

\subsection{Bounds for Chordal skeletons}
We provide a lower bound for both adaptive and non-adaptive deterministic schemes for a chordal skeleton $\ess(D)$. Let $\chi \left( \ess(D) \right)$ be the coloring number of the given chordal graph. Since, chordal graphs are perfect, it is the same as the clique number.

\begin{theorem}\label{lowbnd}
 Given a chordal $\ess(D)$, in the worst case over all DAGs $D$ (which has skeleton $\ess(D)$ and no immoralities), if every intervention is of size at most $k$, then $ \lvert {\cal I}\rvert \geq \frac{\chi \left(\ess(D)\right)}{k} \log_{\frac{\chi \left(\ess(D)\right)e}{k}} \chi \left(\ess(D)\right)$ for any adaptive and non-adaptive algorithm with $\ess_{\mathcal{I}}(D)=D$.
\end{theorem}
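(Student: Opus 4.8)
The plan is to reduce the chordal case to the complete-graph lower bound already established (via Theorem \ref{equivsep} together with the Wegener/Katona bounds). Let $\chi = \chi(\ess(D))$ and recall that, since chordal graphs are perfect, $\chi$ equals the clique number of $\ess(D)$; so $\ess(D)$ contains an induced clique $K_\chi$ on some vertex set $C$ of size $\chi$. The key structural observation I would use is that within this clique the Meek rules R1, R3, R4 can be prevented from propagating \emph{any} information from outside $C$ into $C$, provided the adversary orients $D$ appropriately. Concretely, I would choose the orientation of the edges incident to $C$ (the cut between $C$ and $V\setminus C$) so that every vertex in $C$ has all of its outside-neighbors as parents pointing into $C$ — or some similarly chosen consistent pattern — so that no immorality is created and no Meek rule ever forces an internal edge of $C$ from the outside. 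Then the only way for the algorithm to learn the internal orientation of $K_\chi$ is through the cuts induced inside $C$ by the interventions themselves.

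First I would argue that, restricted to $C$, an intervention $I$ with $|I|\le k$ behaves exactly like an intervention of size at most $k$ on the complete graph $K_\chi$: the set of internal edges of $C$ whose orientation is revealed by $I$ (directly via R0 or via Meek rules applied \emph{within} $C$) is no more than what the same partition $I\cap C$ would reveal when the algorithm is run on $\vec K_\chi(\tau)$ for the corresponding sub-ordering $\tau$. This requires checking that the extra vertices and edges outside $C$ cannot help: R1 never applies inside a clique (every pair is adjacent), and R2/R3/R4 applied using an outside vertex $c$ would need $c$ non-adjacent to some vertex of $C$, which is fine, but the adversary's choice of the cut orientation is designed precisely so these never fire in a way that orients a $C$-internal edge before the interventions do. I would make this precise by an exchange/coupling argument: given the adaptive deterministic algorithm $A$ for the chordal graph, I define from it an adaptive deterministic algorithm $A'$ for $K_\chi$ by feeding $A$ the worst-case outside orientation and reading off the sequence of sets $I_m \cap C$; the hypothesis that $A$ fully identifies $D$ for every orientation forces $A'$ to fully identify $\vec K_\chi(\tau)$ for every $\tau$.

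By Theorem \ref{equivsep}, $A'$ must, for some $\tau$, produce a $(\chi,k)$-separating system, and by the Katona--Wegener lower bound any such system has size at least $\frac{\chi}{k}\log_{\chi e/k}\chi$. Since $|{\cal I}| \ge |\{I_m\cap C\}|$ and the sets $I_m \cap C$ form that separating system, the bound follows. The main obstacle I anticipate is the second paragraph: rigorously showing that the presence of the rest of the chordal graph — its non-clique structure and the Meek rules acting globally — cannot be exploited to learn edges inside $C$ with fewer internal cuts than the complete-graph problem requires. This needs a careful choice of the adversarial orientation outside $C$ (likely: pick a linear order on all of $V$ extending an arbitrary order on $C$, orient every edge from lower to higher, which creates no immorality since a topological order on a chordal comparability-like orientation is immorality-free, and verify no Meek rule orients a $C$-internal edge until an intervention cuts it), plus an induction on the rounds showing the ``projection to $C$'' of the essential graph evolves exactly as in the $K_\chi$ process. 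Everything else is bookkeeping on top of Theorem \ref{equivsep}.
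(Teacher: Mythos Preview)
Your high-level strategy --- isolate a maximum clique $C$ of size $\chi$, choose an adversarial orientation so that nothing outside $C$ helps orient internal edges, then project the algorithm to $C$ and invoke Theorem~\ref{equivsep} and the Katona--Wegener bound --- is exactly the paper's approach. But the key technical choice you make is backwards, and this is a genuine gap.

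You propose orienting the cut so that outside neighbors point \emph{into} $C$. This fails on two counts. First, it lets Meek rule R1 fire into the clique: if $c\notin C$ is adjacent to $a\in C$ but not to $b\in C$, then learning $c\to a$ immediately orients $a\to b$, so the outside \emph{does} help. Second, and relatedly, with all cut edges into $C$ you cannot keep the internal orientation of $C$ arbitrary without creating immoralities: if $c\to a$ with $c\notin C$ non-adjacent to $b\in C$, then any internal orientation with $b\to a$ produces the immorality $c\to a\leftarrow b$. So your adversary loses the freedom needed to force the separating-system lower bound. Your fallback suggestion (``pick a linear order on all of $V$ extending an arbitrary order on $C$'') is also not enough as stated: an arbitrary linear order on a chordal graph does \emph{not} in general give an immorality-free orientation; you need a perfect elimination ordering.

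The paper's fix is precisely the opposite orientation: place $C$ \emph{first} in a perfect elimination ordering (their Lemma~\ref{buhl2} guarantees such a PEO exists with $C$ as an initial segment, and Lemma~\ref{buhl1} guarantees no immoralities). Then every cut edge points \emph{out} of $C$. The clean observation is that every Meek rule that orients $a\to b$ requires some already-known edge directed \emph{into} $a$ or $b$ (or, for R4, into a clique-neighbor of them); with all cut edges outgoing, no such edge can come from outside $C$, so inference on $C$ is isolated and only the traces $I_m\cap C$ matter. Once you flip the direction and invoke the PEO lemmas, the rest of your projection/coupling argument goes through exactly as you outline.
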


\textit{Upper bound:} Clearly, the separating system based algorithm of Section \ref{sec:complete} can be applied to the vertices in the chordal skeleton $\ess(D)$ and it is possible to find all the directions. Thus, $ \lvert {\cal I} \rvert \leq \frac{n}{k}  \log_{\lceil \frac{n}{k} \rceil} n \leq \frac{\alpha (\ess(D)) \chi(\ess(D)) }{k}  \log_{\lceil \frac{n}{k} \rceil} n$. This with the lower bound implies an $\alpha$ approximation algorithm (since $\log_{\lceil \frac{n}{k} \rceil} n \leq  \log_{\frac{\chi \left(\ess(D)\right)e}{k}} \chi \left(\ess(D)\right) $ , under a mild assumption $\chi(\ess(D)) \leq \frac{n}{e}$ ).

\textbf{Remark:} The separating system on $n$ nodes gives an $\alpha$ approximation. However, the new algorithm in Section \ref{new} exploits chordality and performs much better empirically. It is possible to show that our heuristic also has an $\alpha$ approximation guarantee but we skip that.
\subsection{Two extreme counter examples}
   We provide two classes of chordal skeletons $G$: One for which the number of interventions close to the lower bound is \emph{sufficient} and the other for which the number of interventions \emph{needed} is very close to the upper bound.
   
 \begin{theorem}\label{lower-bound} 
  There exists chordal skeletons such that for any algorithm with intervention size constraint $k$, the number of interventions $\lvert \cal I \rvert$ required is at least $\alpha \frac{(\chi-1)}{2k}$ where $\alpha$ and $\chi$ are the independence number and chromatic numbers respectively. There exists chordal graph classes such that $\lvert {\cal I} \rvert = \lceil \frac{\chi}{k} \rceil  \lceil \log_{\lceil \frac{\chi}{k} \rceil} \chi \rceil$ is sufficient.
 \end{theorem}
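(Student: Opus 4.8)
The plan is to prove the two halves with two explicit chordal constructions.

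\textbf{The lower bound.} I would take $G$ to be a ``windmill'' of cliques: a center vertex $v_0$ together with $\alpha$ cliques $B_1,\dots,B_\alpha$ of size $\chi\ge 3$, pairwise intersecting only in $v_0$. It is immediate that $G$ is chordal (every simple cycle stays inside one blade, since leaving a blade forces re-using $v_0$), that $\omega(G)=\chi(G)=\chi$, and that $\alpha(G)=\alpha$ (an independent set uses at most one vertex of each blade and never $v_0$). Restrict attention to the DAGs $D$ in which $v_0$ is the global source; these have no immoralities, and such a $D$ amounts to choosing, for each $i$, an ordering of the $\chi-1$ non-center vertices of $B_i$, i.e. an arbitrary directed clique $\vec K_{\chi-1}$ on $B_i\setminus\{v_0\}$. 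The structural heart of the argument is that the blades are ``learning-independent'': R0 on any intervention only reveals cuts, and there are no edges between distinct blades except through $v_0$; moreover none of R1--R4 ever orients an edge inside $B_i$ using an orientation learned in another blade or on an edge at $v_0$, because any candidate witness vertex is either inside $B_i$ (hence adjacent to the endpoint in question) or outside it (hence non-adjacent). Hence learning the orientation inside $B_i$ is exactly the task of identifying $\vec K_{\chi-1}$ using the restrictions of the interventions to $B_i\setminus\{v_0\}$, to which Theorem~\ref{thm:infoLower} (with $n=\chi-1$) applies: the number of variable accesses inside $B_i$ is at least $(\chi-1)/2$. Summing over the $\alpha$ pairwise disjoint sets $B_i\setminus\{v_0\}$ and using that each intervention has at most $k$ variables in total gives $\lvert\mathcal I\rvert\ge\alpha(\chi-1)/(2k)$ in the worst case over $D$, for adaptive, non-adaptive and randomized algorithms alike.

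\textbf{Sufficiency of $\lceil\chi/k\rceil\lceil\log_{\lceil\chi/k\rceil}\chi\rceil$ interventions.} Here I would take $G$ to be a clique $C=\{c_1,\dots,c_\chi\}$ together with an independent set $\{u_1,\dots,u_m\}$ of simplicial vertices, each $u_j$ adjacent to $\{c_1,\dots,c_{\chi-1}\}$ (all of $C$ except $c_\chi$). Then $G$ is chordal, $\chi(G)=\chi$, $\alpha(G)=m+1$ (arbitrarily large), and there are admissible immorality-free DAGs, e.g. any directed clique on $C$ together with $c_i\to u_j$ for all $i,j$. Now use as interventions the $(\chi,k)$-separating system $\mathcal S$ of Theorem~\ref{const} on the vertex set $C$, which is $\lceil\chi/k\rceil\lceil\log_{\lceil\chi/k\rceil}\chi\rceil$ sets of size $\le k$. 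By the separating property every edge of $C$ crosses some cut and is oriented by R0. Moreover every vertex of $C$ except the at most one whose label is the all-zeros string appears in some member of $\mathcal S$; choosing the labeling so this exceptional vertex is $c_\chi$ (possible because there are strictly more available labels than vertices of $C$), each $c_i$ with $i\le\chi-1$ lies in some intervention, so every edge $c_iu_j$ also crosses a cut and is oriented by R0. Hence all edges of $G$ are oriented, for every admissible $D$; notably this needs neither adaptivity nor Meek rules, and the improvement over the generic $\lvert\mathcal I\rvert\approx\tfrac nk\log n$ bound is the factor $\alpha=m+1$.

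\textbf{Anticipated difficulty.} I expect the delicate step to be making the ``learning-independence'' of the blades rigorous against an adaptive (or randomized) adversary: one wants to argue either by fixing the orderings of the other blades and reading off a bona fide adaptive algorithm for $\vec K_{\chi-1}$ on $B_i$, or by drawing $D$ with an independent uniform ordering per blade and using linearity of expectation over variable accesses together with Theorem~\ref{thm:infoLower}. The accompanying chore is the exhaustive (but routine) verification that none of R1--R4 leaks information across $v_0$. A minor edge case is $k\ge(\chi-1)/2$, where the displayed bound drops below one; there the construction should be read as asserting the underlying statement that at least $\alpha(\chi-1)/2$ variable accesses are needed in total.
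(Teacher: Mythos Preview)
Your proof is correct and follows the same high-level strategy as the paper, but with different concrete constructions in both halves.

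For the lower bound, both arguments manufacture $\alpha$ essentially disjoint $(\chi-1)$-cliques whose orientations cannot influence one another through Meek rules, and then invoke the $n/2$ variable-access bound of Theorem~\ref{thm:infoLower} on each. You realise this as a windmill (all blades meeting in a single hub $v_0$, taken as the global source), whereas the paper threads the cliques along a path: vertices $1,\dots,2\alpha$ form a line, each pair $\{2p-1,2p\}$ is expanded to a $\chi$-clique $C_p$, the line is oriented $1\to 2\to\cdots\to 2\alpha$, and $2p-1$ is made the source of $C_p$, leaving the $(\chi-1)$-cliques $C_p\setminus\{2p-1\}$ independently orientable. Your ``learning-independence'' check is the chore the paper sidesteps by simply declaring the partial orientation revealed to the algorithm; your version therefore does a bit more work but also yields a cleaner single graph rather than a chain of cliques. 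Either construction proves the claim.

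For sufficiency, both arguments use split graphs. The paper states this at the level of ``a clique of size $\chi$ plus an independent set; intervene only on the clique,'' while your version is a specific instance that also handles the one subtlety the paper glosses over, namely that a $(\chi,k)$ separating system may omit one clique vertex (the all-zeros label), which you neutralise by making the omitted vertex non-adjacent to the independent set. This extra care is a genuine, if small, improvement in rigor over the paper's sketch.
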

 
\subsection{An Improved Algorithm using Meek Rules}\label{new}

 In this section, we design an adaptive deterministic algorithm that \textit{anticipates} Meek rule R$1$ usage along with the idea of a separating system. We evaluate this experimentally on random chordal graphs. First, we make a few observations on learning connected directed trees $T$ from the skeleton $\ess(T)$ (undirected trees are chordal) that do not have immoralities using Meek rule R$1$ where every intervention is of size $k=1$. Because the tree has no cycle, Meek rules R$2$-R$4$ do not apply.  
 \begin{lemma}\label{lem:root}
    Every node in a directed tree with no immoralities has at most one incoming edge. There is a \textit{root} node with no incoming edges and intervening on that node alone identifies the whole tree using repeated application of rule R$1$.
 \end{lemma}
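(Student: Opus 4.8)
The plan is to prove the two assertions of Lemma~\ref{lem:root} in turn, relying only on the fact that $T$ is a directed tree whose skeleton is undirected and which has no immoralities. First I would argue that no node has two incoming edges. Suppose some node $v$ has two parents $x$ and $y$, i.e. $x\rightarrow v$ and $y\rightarrow v$. Because the skeleton is a tree, $x$ and $y$ cannot be adjacent (an edge $x-y$ together with $x-v$ and $y-v$ would form a $3$-cycle in the skeleton, contradicting acyclicity of a tree). But then the induced subgraph on $\{x,v,y\}$ is exactly an immorality, contradicting the hypothesis. Hence every node has in-degree at most one.

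Next I would establish existence of a root. Orient each edge of $T$ by its direction in the DAG and consider any maximal directed path $v_0\rightarrow v_1\rightarrow\cdots\rightarrow v_t$; such a path exists and is finite since $T$ is acyclic and finite. I claim $v_0$ has in-degree zero: if $v_0$ had an incoming edge $u\rightarrow v_0$, then either $u$ is one of the $v_i$, which would create a directed cycle, or $u$ is a new vertex, which would extend the path and contradict maximality. So $v_0$ is a root. Uniqueness follows from the in-degree bound together with connectivity: in a tree on $n$ vertices there are $n-1$ edges, so $\sum_v \mathrm{indeg}(v)=n-1$; since each in-degree is $0$ or $1$, exactly one vertex has in-degree $0$.

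Finally I would show that intervening on the root $r$ identifies the whole tree via repeated application of R$1$. Intervening on $r$ reveals (by R$0$) the orientation of every edge incident to $r$; since $r$ is a source, all of these point away from $r$. Now I induct on distance from $r$ in the (undirected) tree: suppose every edge at distance $\le d-1$ from $r$ has been oriented, and consider an undirected edge $a-b$ where $a$ is at distance $d-1$ and $b$ at distance $d$. There is an already-oriented edge $c\rightarrow a$ on the path from $r$ to $a$ (for $d=1$ this is $r\rightarrow a$, which we just learned). Since the skeleton is a tree, $c$ and $b$ are non-adjacent — an edge $c-b$ would close a cycle with the tree path from $c$ to $b$ through $a$. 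Thus R$1$ applies and orients $a\rightarrow b$. By induction every edge of $T$ gets oriented, and since R$2$--R$4$ never fire on a tree (they each require a structure containing a cycle in the skeleton), $r$'s intervention plus Meek rule R$1$ suffice to recover all of $T$.

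The only mildly delicate point is the inductive step on R$1$: one must make sure that at each stage there is a \emph{correctly oriented} incoming edge at the near endpoint $a$ whose source is non-adjacent to the far endpoint $b$, and this is exactly where chordality/treeness is used (the tree has a unique $r$--$b$ path, it passes through $a$, and no chord can exist). Everything else is a counting argument and the definition of an immorality, so I expect no real obstacle there.
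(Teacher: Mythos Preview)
Your proof is correct and follows precisely the line the paper has in mind: the paper's own proof is a single sentence stating that the claim is a direct consequence of acyclicity, non-existence of immoralities, and the definition of rule R$1$, and you have simply spelled out those three ingredients in full (the immorality argument for in-degree $\le 1$, the counting/maximal-path argument for a unique source, and the distance-from-root induction for R$1$ propagation). There is nothing to add or correct.
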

 \begin{lemma}\label{treealg}
  If every intervention in ${\cal I}$ is of size at most $1$, learning all directions on a directed tree $T$ with no immoralities can be done adaptively with at most $\lvert {\cal I} \rvert \leq O(\log_2 n)$ where $n$ is the number of vertices in the tree. The algorithm runs in time $\mathrm{poly}(n)$.
 \end{lemma}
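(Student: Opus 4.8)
The plan is to combine the structural fact from Lemma \ref{lem:root} with a recursive centroid-style decomposition of the tree. By Lemma \ref{lem:root}, the undirected tree $\ess(T)$, once we know the root, is completely oriented by R1 (every non-root node has a unique parent on the path toward the root, and R1 propagates orientations outward from any intervened node). So the whole task reduces to \emph{locating the root} using size-$1$ interventions, where each intervention on a vertex $v$ reveals, via R0 and then R1, the orientation of every edge whose orientation is not yet ``blocked'' by an unoriented vertex — concretely, intervening on $v$ tells us, for each edge incident to $v$, whether it points into or out of $v$, and R1 then cascades.

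The key observation I would make precise is: pick any vertex $v$ and intervene on it. This orients all edges at $v$. If some edge at $v$ points \emph{into} $v$, then the root lies in the subtree hanging off that incoming edge; moreover R1 will orient the entire path structure on the ``downstream'' side, so the set of still-unoriented edges is confined to a single subtree $T'$ of $T$ obtained by deleting $v$ and keeping the component containing the in-neighbor. If \emph{all} edges at $v$ point out of $v$, then $v$ is the root and we are done by Lemma \ref{lem:root}. Thus each intervention either finishes the problem or restricts the search to one connected subtree. To get the $O(\log_2 n)$ bound I would choose $v$ to be a \emph{centroid} of the current candidate subtree, i.e. a vertex whose removal leaves all components of size at most half. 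Since the root-containing subtree $T'$ is strictly contained in one component of $T \setminus \{v\}$, its size is at most $\lceil n/2 \rceil$, so after $O(\log_2 n)$ rounds the candidate set has size $1$ and that vertex is the root. Each round: compute a centroid (linear time), perform the intervention, run R0 and the R1 closure (polynomial time), and update the candidate subtree; hence the overall running time is $\mathrm{poly}(n)$.

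I should be slightly careful about two points, and I'd address them explicitly. First, after we delete $v$ and recurse into $T'$, the directions learned so far may already pin down some edges inside $T'$; this only helps (the candidate set can only shrink), so the halving bound is unaffected, but I'd phrase the recursion in terms of ``the subtree known to contain the root'' rather than a literal edge-deletion to keep the invariant clean. Second, I need that the orientations revealed by intervening on $v$, together with R1, indeed confine all remaining ambiguity to a single subtree and do not leave ambiguous edges scattered on the ``away-from-root'' side: this follows because on that side every vertex on the path from $v$ has its parent-edge oriented (it points away from $v$, toward the leaves), triggering R1 down every branch, exactly as in the proof of Lemma \ref{lem:root} applied with $v$ playing a role analogous to a local source for its downstream subtrees.

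The main obstacle is the bookkeeping in the second point — verifying that one intervention plus the R1 closure cleanly partitions the tree into ``fully oriented'' and ``one subtree still containing the root,'' with no leftover unoriented edges outside that subtree. Once that claim is established, the centroid argument gives the logarithmic bound immediately and the polynomial running time is routine.
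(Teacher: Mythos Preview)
Your proposal is correct and follows essentially the same approach as the paper: reduce the problem to locating the root via Lemma~\ref{lem:root}, intervene on a well-chosen separator vertex, use R0 and then R1 to orient all but the one subtree containing the in-neighbor (i.e., the root), and recurse. The only cosmetic difference is that you use a centroid (components of size at most $n/2$), whereas the paper invokes the tree separator giving components of size at most $\tfrac{2}{3}n$; both yield the $O(\log_2 n)$ bound, and your worry about the ``bookkeeping'' in the second point is exactly what the paper handles in one sentence by noting that R1 orients every component not containing the parent $p_1$.
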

 \begin{lemma}\label{inductree}
  Given any chordal graph and a valid coloring, the graph induced by any two color classes is a forest.
 \end{lemma}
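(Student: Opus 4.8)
Proof proposal for Lemma \ref{inductree} (the graph induced by any two color classes of a properly colored chordal graph is a forest):

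The plan is to argue by contradiction: suppose the subgraph $H$ induced by two color classes $C_1$ and $C_2$ contains a cycle. Since $H$ is bipartite (every edge goes between $C_1$ and $C_2$, as the coloring is proper), any cycle in $H$ has even length, hence length at least $4$. Now I would invoke chordality of the original graph $G$: the cycle in $H$ of length $\ge 4$ must have a chord in $G$. The key observation is that any chord of this cycle joins two vertices of the cycle; but the cycle alternates between $C_1$ and $C_2$, so a chord connecting two vertices at even distance along the cycle would join two vertices of the same color, which is impossible in a proper coloring, while a chord connecting two vertices at odd distance along the cycle is itself an edge of $H$ (it joins a $C_1$ vertex to a $C_2$ vertex) and therefore would already have been part of the induced subgraph $H$.

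First I would take a shortest cycle in $H$, call it $v_1 v_2 \cdots v_{2t} v_1$ with $t \ge 2$. By chordality of $G$, since $2t \ge 4$, there is a chord $\{v_i, v_j\}$ in $E(G)$ with $j \ne i\pm 1 \pmod{2t}$. If $i$ and $j$ have the same parity, then $v_i$ and $v_j$ lie in the same color class, contradicting that $\{v_i,v_j\}\in E(G)$ with a proper coloring. If $i$ and $j$ have opposite parity, then $v_i \in C_1, v_j \in C_2$ (or vice versa), so the chord $\{v_i,v_j\}$ is an edge of the induced subgraph $H$; but then $v_i$ and $v_j$ together with either arc of the original cycle form two strictly shorter cycles in $H$ (each of length $\ge 4$ since the chord is not an edge of the cycle and both arcs have length $\ge 2$), contradicting minimality. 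Either way we reach a contradiction, so $H$ is acyclic, i.e., a forest.

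The main obstacle — really the only subtlety — is making sure the shorter cycles produced in the opposite-parity case genuinely have length at least $4$ so that chordality can be reapplied, and more importantly that they are cycles in $H$ and not just in $G$; this is handled by noting the chord edge lies in $H$ and the two arcs of the minimal cycle lie in $H$ by construction. A cleaner alternative that sidesteps the induction is to directly take a chordless cycle: in a chordal graph every cycle of length $\ge 4$ has a chord, so $H \subseteq G$ could only contain cycles of length $3$, but $H$ is bipartite and hence triangle-free; therefore $H$ has no cycles at all. I would present this shorter version as the main argument.
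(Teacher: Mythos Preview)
Your proposal is correct, and the ``cleaner alternative'' you close with is exactly the paper's own proof: the induced subgraph on two color classes is bipartite (hence has no odd cycles), and as an induced subgraph of a chordal graph it is itself chordal (hence has no induced cycle of length $\ge 4$), so it is acyclic. Your longer shortest-cycle argument with the parity case analysis is also fine and simply unpacks the same reasoning in more detail.
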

  
  In the next section, we combine the above single intervention adaptive algorithm on directed trees which uses Meek rules, with that of the non-adaptive separating system approach.

 \subsubsection{Description of the algorithm}\label{ideas}
 
 The key motivation behind the algorithm is that, a pair of color classes is a forest (Lemma \ref{inductree}). Choosing the right node to intervene leaves only a small subtree unlearnt as in the proof of Lemma \ref{treealg}. In subsequent steps, suitable nodes in the remaining subtrees could be chosen until all edges are learnt. We give a brief description of the algorithm below.
 
Let $G$ denote the initial undirected chordal skeleton $\ess(D)$ and let $\chi$ be its coloring number. Consider a $(\chi,k)$ separating system ${\cal S}=\{S_i\}$. To intervene on the actual graph, an intervention set $I_i$ corresponding to $S_i$ is chosen. We would like to intervene on a node of color $c \in S_i$.
 
 Consider a node $v$ of color $c$. Now, we attach a score $P(v,c)$ as follows. For any color $c'\notin S_i$, consider the induced forest $F(c,c')$ on the color classes $c$ and $c'$ in $G$. Consider the tree $T(v,c,c')$ containing node $v$ in $F$. Let $d(v)$ be the degree of $v$ in $T$. Let $T_1, T_2, \ldots T_{d(v)}$ be the resulting disjoint trees after node $v$ is removed from $T$. If $v$ is intervened on, according to the proof of Lemma \ref{treealg}: a)
 All edge directions in all trees $T_i$ except one of them would be learnt when applying Meek Rules and rule R$0$. b) All the directions from $v$ to all its neighbors would be found. 

The score is taken to be the total number of edge directions guaranteed to be learnt in the worst case. 
Therefore, the score $P(v)$ is:
$
   P(v) = \sum \limits_{c': \lvert {c,c'} \bigcap \rvert =1} \left( \lvert T(c,c') \rvert - \max \limits_{ 1 \leq j \leq d(v)} \lvert T_{j} \rvert \right).
$
 The node with the highest score among the color class $c$ is used for the intervention $I_i$. After intervening on $I_i$, all the edges whose directions are known through Meek Rules (by repeated application till nothing more can be learnt) and R$0$ are deleted from $G$. Once ${\cal S}$ is processed, we recolor the sparser graph $G$. We find a new ${\cal S}$ with the new chromatic number on $G$ and the above procedure is repeated. The exact hybrid algorithm is described in Algorithm \ref{alg:hybrid}. 

\begin{theorem}\label{Correctness}
 Given an undirected choral skeleton $G$ of an underlying directed graph with no immoralities, Algorithm \ref{alg:hybrid} ends in finite time and it returns the correct underlying directed graph. The algorithm has runtime complexity polynomial in $n$.
\end{theorem}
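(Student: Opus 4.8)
The plan is to verify the three claims in turn: termination, correctness, and polynomial runtime. For \textbf{termination}, I would argue by a potential-function / monotonicity argument on the number of unoriented edges of $G$. Each outer iteration processes a full $(\chi,k)$ separating system ${\cal S}=\{S_i\}$ on the current color classes. I would first show that processing a single separating system ${\cal S}$ on a graph whose chromatic number is $\chi$ orients \emph{at least one} new edge whenever $G$ still has an undirected edge — in fact it orients the edge between the two color classes of some undirected edge, because a separating system separates every pair of colors, so for any undirected edge $u-v$ with colors $c\ne c'$ there is an $S_i$ containing exactly one of $c,c'$, and when we intervene on the chosen node of color $c$ in $I_i$ (or on color $c'$, depending on which $S_i$ separates), rule R0 applied to the cut, possibly followed by Meek rules, orients that edge. (Here one uses that the chosen node has positive score $P(v)$ whenever its induced tree $T(v,c,c')$ has an edge, by Lemma \ref{treealg}'s proof and Lemma \ref{inductree}.) Hence after each outer iteration the number of undirected edges strictly decreases, so there are at most $|E|$ outer iterations and termination follows; the recoloring step only needs a proper coloring of a chordal graph, which is computable in polynomial time.

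For \textbf{correctness}, the key point is that the algorithm never makes an incorrect orientation: every orientation it performs is a consequence of R0 (reading off the cut of an intervention in the true post-intervention DAG, which is sound by \cite{Spirtes2001}) or of the Meek rules (sound by \cite{Meek1995a}), so whatever the algorithm outputs is consistent with the true DAG $D$. It then remains to argue \emph{completeness}: when the algorithm halts, every edge is oriented. But the algorithm only halts when an entire separating system has been processed without any undirected edge remaining — equivalently, by the termination argument, it cannot halt while an undirected edge exists, since processing the next separating system would orient at least one more. So at halting, $\ess_{\cal I}(D)$ has no undirected edges, and since every orientation agrees with $D$ and $D$ has the same skeleton, $\ess_{\cal I}(D)=D$.

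For the \textbf{runtime}, I would bound the work per outer iteration: there are at most $|E|$ outer iterations; within each, the separating system has size $|{\cal S}| \le (\lceil \chi/k\rceil)\lceil \log_{\lceil \chi/k\rceil}\chi\rceil = \mathrm{poly}(n)$ by Theorem \ref{const}; for each $S_i$ we compute scores $P(v,c)$ for the relevant nodes, each score being a sum over color classes $c'$ of quantities derived from the induced forest $F(c,c')$ and the tree $T(v,c,c')$ — all of which (connected components, subtree sizes after deleting $v$) are computable by elementary graph traversals in time $\mathrm{poly}(n)$. Applying R0 and exhaustively applying the four Meek rules until closure also takes $\mathrm{poly}(n)$ time (each rule application orients an edge, so there are at most $|E|$ applications, and checking applicability of a rule is polynomial). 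Multiplying these polynomial factors gives an overall $\mathrm{poly}(n)$ bound.

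The main obstacle I anticipate is the ``at least one new edge per separating system'' step underpinning both termination and completeness: one must be careful that the \emph{score-maximizing} choice of node within color class $c$ (rather than an arbitrary node) still guarantees progress, and that the guaranteed-learnt count $P(v)$ indeed lower-bounds the true number of newly oriented edges after R0 and the Meek-rule closure — i.e., that interventions and Meek rules never orient \emph{fewer} edges than the tree-by-tree analysis of Lemma \ref{treealg} promises. This requires checking that restricting attention to a single pair of color classes $(c,c')$ only undercounts, and that the contributions from different $c'$ are not double-counted in a way that would make $P(v)$ an overestimate; establishing that $P(v)>0$ for a suitable $v$ whenever an undirected edge incident to color $c$ survives is the crux.
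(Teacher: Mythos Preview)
Your proposal is essentially correct and follows the same approach as the paper's proof: strict decrease in the number of unoriented edges per pass through a separating system, soundness of R0 and Meek rules for correctness, and a polynomial bound on the number of outer iterations for runtime. The paper's proof is considerably terser but makes the same three points.

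Two remarks. First, your termination step says that for an undirected edge $u\!-\!v$ with colours $c\ne c'$, intervening on ``the chosen node of colour $c$'' causes R0 to orient \emph{that} edge. This is not literally true unless the argmax happens to be $u$; what is true (and what the paper argues) is that some node of colour $c$ incident to an unlearnt edge has positive score, the argmax therefore has positive score, and intervening on the argmax orients at least one edge incident to \emph{it} via R0. Your parenthetical and your ``main obstacle'' paragraph show you see this, but the main sentence should be tightened. Second, the paper explicitly records an invariant you use implicitly: after each round of R0 plus Meek closure, the residual undirected graph $G$ is again a disjoint union of chordal components (this is the chain-graph-with-chordal-components property of $\mathcal{I}$-essential graphs). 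You need this to justify that recolouring at line~\ref{linecolor} is well-defined and efficient, so it is worth stating as a lemma or citing the relevant structural result rather than leaving it tacit.
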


\begin{algorithm}[ht!]
    \caption{Hybrid Algorithm using Meek rules with separating system}
   \label{alg:hybrid}
\begin{algorithmic}[1]
    \State {\bfseries Input:} Chordal Graph skeleton $G=(V,E)$ with no Immoralities.
    \State Initialize $\vec{G}(V,E_d=\emptyset)$ with $n$  nodes and no directed edges. Initialize time $t=1$.
     \While  {$E \neq \emptyset$} 
     \State \label{linecolor} Color the chordal graph $G$ with $\chi$ colors.  \Comment{Standard algorithms exist to do it in linear time}
     \State Initialize color set ${\cal C}= \{1,2 \ldots \chi\}$. Form a $(\chi, \min (k,\lceil \chi/2 \rceil ))$ separating system ${\cal S}$ such that $\lvert S \rvert \leq k,~ \forall S \in {\cal S}$.
       \For {$i=1$ until $\lvert {\cal S} \rvert $}
         \State Initialize Intervention $I_t=\emptyset$.
             \For {$c \in S_i$ and every node $v$ in color class $c$}      
                  \State Consider $F(c,c')$, $T(c,c',v)$ and $\{T_j\}_{1}^{d(i)}$(as per definitions in Sec. \ref{ideas}). 
                 \State  Compute: $P(v,c) = \sum \limits_{c' \in {\cal C} \bigcap S_i^c} \lvert T(c,c',v) \rvert - \max \limits_{ 1 \leq j \leq d(i)} \lvert T_{j} \rvert $.         
          \EndFor
          \If {$k \leq \chi/2$}    
             \State $ I_t = I_t \bigcup \limits_{c \in S_i} \{\mathop{\rm argmax} \limits_{v:P(v,c) \neq 0} P(v,c)\}$.
           \Else
              \State $ I_t = I_t \mathop{\cup}_{c \in S_i} \{\mathrm{First~}\frac{k}{\lceil \chi/2 \rceil} \mathrm{~nodes~}v\mathrm{~with~ largest~ nonzero~} P(v,c) \}$.
           \EndIf
           \State $t=t+1$
           \State Apply R$0$ and Meek rules using $E_d$ and $E$ after intervention $I_t$. Add newly learnt directed edges to $E_d$ and delete them from $E$. 
        \EndFor
        \State Remove all nodes which have degree $0$ in G.        
    \EndWhile
    \State \Return $\vec{G}$.
\end{algorithmic}
\end{algorithm}

\section{Simulations}

We simulate our new heuristic, namely Algorithm \ref{alg:hybrid}, on randomly generated chordal graphs and compare it with a naive algorithm that follows the intervention sets given by our $(n,k)$ separating system as in Theorem \ref{const}.  Both algorithms apply R$0$ and Meek rules after each intervention according to (\ref{eqn:learnseq}). We plot the following lower bounds: a) \emph{Information Theoretic LB} of $\frac{\chi}{2k}$ b) \emph{Max. Clique Sep. Sys. Entropic LB} which is the chromatic number based lower bound of Theorem \ref{lowbnd}. Moreover, we use two known $(\chi,k)$ separating system constructions for the maximum clique size as ``references": The best known $(\chi,k)$ separating system is shown by the label \emph{Max. Clique Sep. Sys. Achievable LB} and our new simpler separating system construction (Theorem \ref{const}) is shown by \emph{Our Construction Clique Sep. Sys. LB}. As an upper bound, we use the size of the best known $(n,k)$ separating system (without any Meek rules) and is denoted \emph{Separating System UB}. 

\emph{Random generation of chordal graphs:} Start with a random ordering $\sigma$ on the vertices. Consider every vertex starting from $\sigma(n)$. For each vertex $i$, $(j,i)\in E$ with probability inversely proportional to $\sigma(i)$ for every $j\in S_i$ where $S_i=\{v:\sigma^{-1}(v)<\sigma^{-1}(i)\}$. The proportionality constant is changed to adjust sparsity of the graph. After all such $j$ are considered, make $S_i\cap \textrm{ne}(i)$ a clique by adding edges respecting the ordering $\sigma$, where $\textrm{ne}(i)$ is the neighborhood of $i$. The resultant graph is a DAG and the corresponding skeleton is chordal. Also, $\sigma$ is a perfect elimination ordering.

\begin{figure*}[ht!]
\centering
\subfigure[$n=1000,k=10$]{\label{fig:n1000k10}\includegraphics[width=0.45\columnwidth]{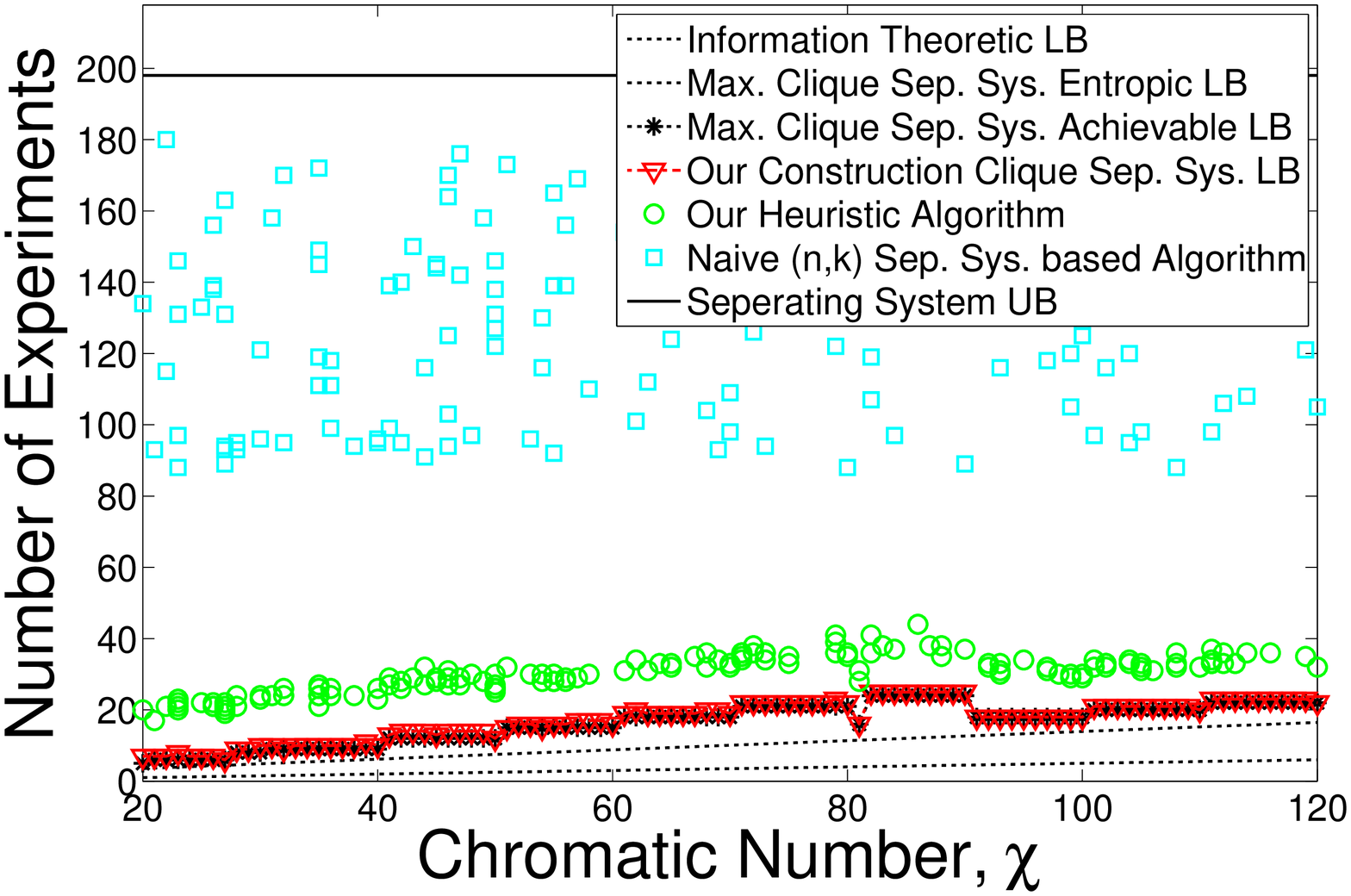}}
\subfigure[$n=2000,k=10$]{\label{fig:n2000k10}\includegraphics[width=0.45\columnwidth]{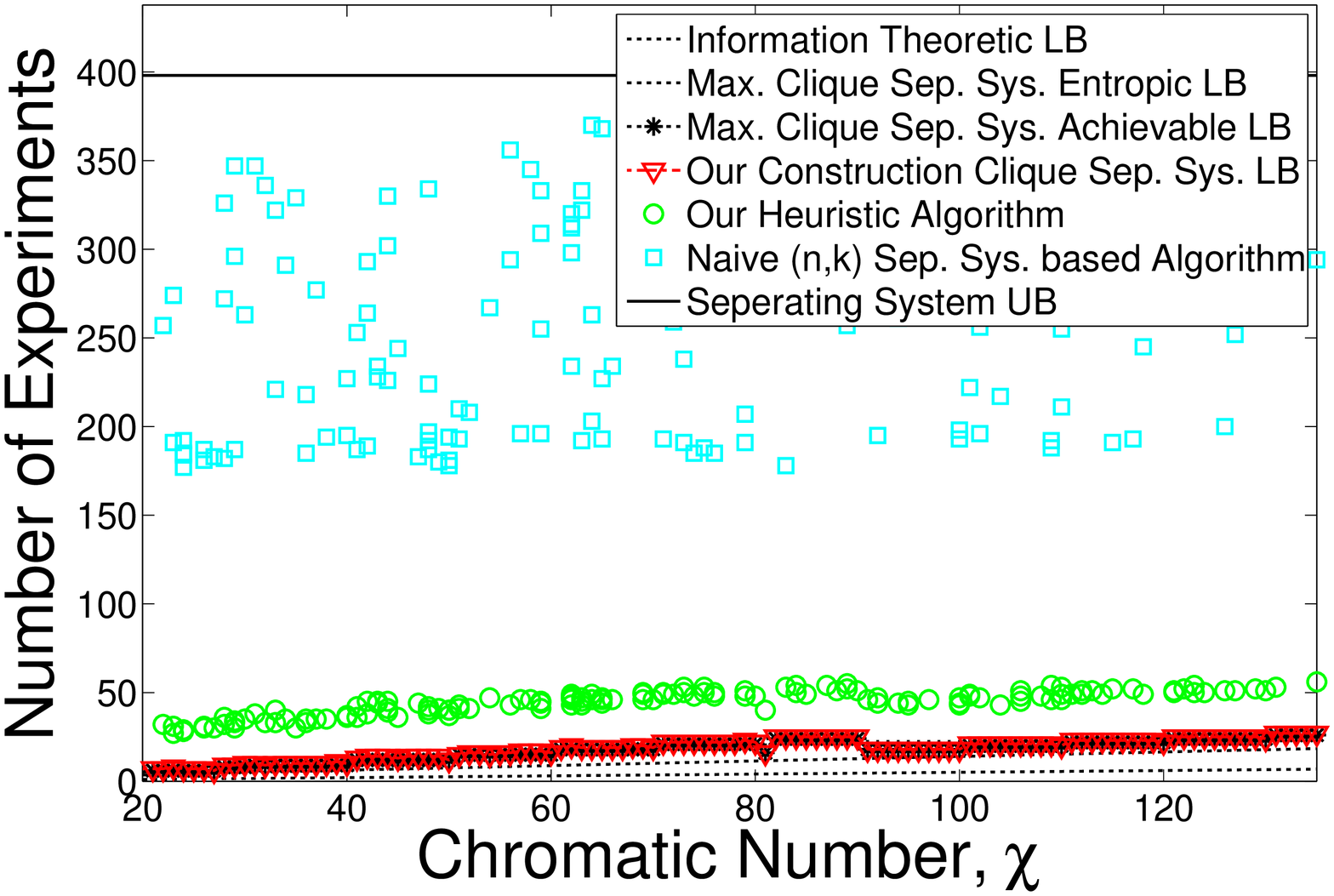}}
\caption{$n$: no. of vertices, $k$: Intervention size bound. The number of experiments is compared between our heuristic and the naive algorithm based on the $(n,k)$ separating system on random chordal graphs. The red markers represent the sizes of $(\chi,k)$ separating system. Green circle markers and the cyan square markers for the same $\chi$ value correspond to the number of experiments required by our heuristic and the algorithm based on an $(n,k)$ separating system(Theorem \ref{const}), respectively, on the same set of chordal graphs. Note that, when $n=1000$ and $n=2000$, the naive algorithm requires on average about $130$ and $260$ (close to $n/k$) experiments respectively, while our algorithm requires at most $\sim40$ (orderwise close to $\chi/k =10$) when $\chi=100$. 
}
\label{fig:simulation}
\end{figure*}

 \textbf{Results:} We are interested in comparing our algorithm and the naive one which depends on the $(n,k)$ separating system to the size of the $(\chi,k)$ separating system. The size of the $(\chi,k)$ separating system is roughly $\tilde{O}(\chi/k)$.  Consider values around $\chi=100$ on the x-axis for the plots with $n=1000,k=10$ and $n=2000,k=10$. Note that, our algorithm performs very close to the size of the $(\chi,k)$ separating system, i.e. $\tilde{O}(\chi/k)$. In fact, it is always $<40$ in both cases while the average performance of naive algorithm goes from $130$ (close to $n/k=100$) to $260$ (close to $n/k=200$). The result points to this: For random chordal graphs, the structured tree search allows us to learn the edges in a number of experiments quite close to the lower bound based only on the maximum clique size and not $n$. The plots for $(n,k)=(500,10)$ and $(n,k)=(2000,20)$ are given in Appendix.
 

\section{Conclusions}
  We have considered the problem of adaptively designing interventions of bounded size to learn a causal graph under Pearl's SEM-IE model. We proposed lower and upper bounds for the number of interventions needed in the worst case for various classes of algorithms, when the causal graph skeleton is complete. We developed lower and upper bounds on the minimum number of interventions required in the worst case for general graphs. We characterized two extremal graph classes such that the minimum number of interventions in one class is close to the lower bound and in the other class it is close to the upper bound. In the case of chordal skeletons, we proposed an algorithm that combines ideas for the complete graphs with the ones when the skeleton is a forest via application of Meek rules. Empirically, on randomly generated chordal graphs, our algorithm performs close to the lower bound and it outperforms the previous state of the art. Possible future work includes obtaining a tighter lower bound for chordal graphs that would possibly establish a tighter approximation guarantee for our algorithm.
\subsubsection*{Acknowledgments}
Authors acknowledge the support from grants: NSF CCF 1344179, 1344364, 1407278, 1422549 and a ARO YIP award (W911NF-14-1-0258). We also thank Frederick Eberhardt for helpful discussions.
\newpage
\bibliographystyle{IEEEtran}
\bibliography{ActiveLearning2015_Bib}

\newpage
\section*{Appendix}

 \subsection{Proof of Lemma \ref{label}}
 We describe a string labeling procedure as follows to label elements of the set $[1:n]$.
 
 \textit{String Labelling:}
  Let $a>1$ be a positive integer. Let $x$ be the integer such that $a^{x} < n \leq a^{x+1}$. $x+1 = \lceil \log_{a} n \rceil$. Every element $j \in [1:n]$ is given a label $L(j)$ which is a string of integers of length $x+1$ drawn from the alphabet $\{0,1,2 \ldots a\}$ of size $a+1$. Let $n= p_d a^d+r_d$ and $n=p_{d-1}a^{d-1}+r_{d-1}$ for some integers $p_d,p_{d-1},r_{d},r_{d-1}$, where $r_d < a^d$ and $r_{d-1}<a^{d-1}$. Now, we describe the sequence of the $d$-th digit across the string labels of all elements from $1$ to $n$: 
  \begin{enumerate}
   \item Repeat $0$ $a^{d-1}$ times, repeat the next integer $1$ $a^{d-1}$ times and so on circularly \footnote{Circular means that after $a-1$ is completed, we start with $0$ again.} from $\{0,1 \ldots a-1 \}$ till $p_da^d$. 
   \item After that, repeat $0$ $\lceil r_d/a \rceil$ times followed by $1$ $\lceil r_d/a \rceil$ times till we reach the $n$th position. Clearly, $n$-th integer in the sequence would not exceed $a-1$. 
   \item Every integer occurring \textit{after} the position $a^{d-1}p_{d-1}$ is increased by $1$.
  \end{enumerate}

 From the three steps used to generate every digit, a straightforward calculation shows that every integer letter is repeated at most $\lceil n/a \rceil$ times in every digit $i$ in the string. Now, we would like to prove inductively that the labels are distinct for all $n$ elements. Let us assume the induction hypothesis: For all $n < a^{q+1}$, the labels are distinct. The base case of $q=0$ is easy to see. Then, we would like to show that for $a^{q+1} \leq n < a^{q+2}$, the labels are distinct. 
    
    Another way of looking at the labeling procedure is as follows. Let $n=a^{q+1}p+r$ with $r< a^{q+1}$. Divide the label matrix $L$ (of dimensions $(q+2) \times n$) into two parts, one $L_1$ consisting of the first $pa^{q+1}$ columns and the other $L_2$ consisting of the remaining columns. The first $q+1$ rows of $L_1$ is nothing but the string labels for all numbers from $0$ to $pa^{q+1}$ expressed in base $a$. For any row $i \leq \lceil \log_a r \rceil$ in the original matrix $L$ of labels, till the end of first $pa^{q+1}$ columns, the labeling procedure would be still in Step $1$. After that, one can take $r$ to be the new size of the set of elements to be labelled and then restart the procedure with this $r$. Therefore we have the following \emph{key} observation: $L_2(1: \lceil \log_a r \rceil,: )$ (the matrix with first $\lceil \log_a r \rceil $ rows of $L_2$) is nothing but the label matrix for $r$ distinct elements from the above labeling procedure. 
    
    Since, $r < a^{q+1}$, by the induction hypothesis, the columns are distinct. Hence, any two columns in $L_2$ are distinct. Suppose the first $q+1$ rows of two columns $b$ and $c$ of $L_1$ are identical. These correspond to base $a$ expansion of $b-1$ and $c-1$. They are separated by at least $a^{q+1}+1$ columns. But the last row of columns $b$ and $p$ in $L_1$ has to be distinct because according to Step $2$ and Step $3$ of the labeling procedure, in the $q+2^{th}$ row, every integer is repeated at most $\lceil n/a \rceil \leq a^{q+1}$ times continuously, and only once. Therefore, any two columns in $L_1$ are distinct. The last row entries in $L_1$ are different from $L_2$ because of the addition in Step $3$. Therefore, all columns of $L$ are distinct. Hence, by induction, the result is shown.

 \subsection{Proof of Theorem \ref{const}}
By Lemma \ref{label}, $i$th place has at most $\lceil \frac{n}{\lceil n/k\rceil} \rceil \leq  k$ occurrences of symbol $j$. Therefore, $\lvert S_{i,j} \rvert \leq k$. Now, consider the pair of distinct elements $p,q \in [1:n]$. Since they are labelled distinctly (Lemma \ref{label}), there is at least one letter $i$ in their string labels where they differ. Suppose the distinct $i$th letters are $a,b \in {\cal A},~a \neq b $ and let us say $a \neq 0$ without loss of generality. Then, clearly the separation criterion is met by the subset $S_{i,a}$. This proves the claim.

 \subsection{Proof of Theorem \ref{equivsep}}
We construct a worst case $\sigma$ inductively. Before every step $m$, the adaptive algorithm deterministically chooses $I_m$ based on $\ess_{\{I_1,I_2 \ldots I_{m-1}\}}(K_n)$. Therefore, we will reveal a partial order $\sigma^{(m-1)}$ to satisfy the observations so far. Inductively for every $m$, we will make sure that after $I_m$ is chosen by the algorithm, further details about $\sigma$ can be revealed to form $\sigma^{(m)}$ such that after intervening on $I_2$ and then applying R$0$, we will make sure there is no opportunity to apply the rule $R2$. This would make sure that ${\cal I}$ is a separating system on $n$ elements.
   
    Before intervention at any step $m$, let us `tag' every vertex $i$ using a subset $C_i^{(m-1)} \subseteq [1:m]$ such that $C_i^{(m-1)} = \{ p : i \in I_p ,~ p \leq m-1 \}$. $C_i^{(m-1)}$ contains indices of all those interventions that contain vertex $i$ before step $m$. Let ${\cal C}^{(m-1)}$ contain distinct elements of the multi-set $\{C_i^{(m-1)}\}$ .We will construct $\sigma$ partially such that it satisfies the following criterion always:
    
     \textit{Inductive Hypothesis:} 
     The partial order $\sigma^{(m-1)}$ is such that for any two elements $i,j$ with $C_i$ and $C_j$, $i$ and $j$ are incomparable if $C_i=C_j$ and comparable otherwise. This means the edges between the elements tagged with the same tag $C$ has not been revealed, and thus the relevant directed edges are not known by the algorithm. 
     
      Now, we briefly digress to argue that if we could construct $\sigma^{(1)}, \sigma^{(2)} \ldots $ satisfying such a property throughout, then clearly all vertices must be tagged differently otherwise the directions among the vertices that are tagged similarly cannot be learned by the algorithm. Therefore, the algorithm has not succeeded in its task. If all vertices are tagged differently, then it means it is a separating system.
      
      \textit{Construction of $\sigma^{(m)}$:} We now construct $\sigma^{(m)}$ that can be shown to satisfy the induction hypothesis before step $m+1$. Before step $m$ , consider the vertices in $C \in {\cal C}^{(m-1)}$ for any $C$. Let the current intervention be $I_m$ chosen by the deterministic algorithm. We make the following changes: Modify $\sigma^{(m-1)}$ such that  vertices in $I_m \bigcap C$ come before $(I_m)^c \bigcap C$ in the partial order $\sigma^{(m)}$ (vertices inside either sets are still not ordered amongst themselves)  in the ordering and clearly the directions between these two sets are revealed by R$0$. By the induction hypothesis for step $m$ and with the new tagging of vertices into ${\cal C}^{(m)}$, it is easy to see that only directions between distinct $C's$ in the new ${\cal C}^{(m)}$ have been revealed and all directions within a tag set $C$ are not revealed and all vertices in a tag set are contiguous in the ordering so far. We need to only show that rule R$2$ cannot reveal anymore edges amongst vertices in $C \in {\cal C}^{(m)}$ after the new $\sigma^{(m)}$ and intervention $I_m$. Suppose there are two vertices $i,j$ such that just after intervention $I_m$ and the modified $\sigma^{(m)}$, they are tagged identically and application of R$2$ reveals the direction between $i$ and $j$ before the next intervention. Then there has to be a vertex $k$ tagged differently from $i,j$ such that $j \rightarrow k$ and $k \rightarrow i$ are both known. But this implies that $j$ and $i$ are comparable in $\sigma^{(m)}$ leading to a contradiction. This implies the hypothesis holds for step $m+1$.
      
      \textit{Base case:} Trivially, the induction hypothesis holds for step $0$ where $\sigma^{(0)}$ leaves the entire set unordered. 
 \subsection{Proof of Lemma \ref{lem:root}}
   The proof is a direct obvious consequence of acyclicity, non-existence of immoralities and the definition of rule R1. 
   
    \subsection{Proof of Lemma \ref{treealg}}
    By Lemma \ref{lem:root}, it is sufficient for an algorithm to identify the root node of the tree. Suppose the root node is $b$ unknown to the algorithm. Every tree has a single vertex separator that partitions the tree into components each of which has size at most $\frac{2}{3}n$ \cite{lipton1979separator}. Choose that vertex separator $a_1$ (it can be found in by removing every node and determining the components left). If it is a root node we stop here. Otherwise, its parent $p_1$ (if it is not) after application of rule R$0$ is identified. Let us consider component trees $T_1,T_2 \ldots T_k$  that result by removing node $a_1$. Let $T_1$ contain $p_1$. All directions in all other trees are known after repeated application of R$1$ on the original tree after R$0$ is applied. Directions in T$1$ will not be known. For the next step, $\ess(T_1)$ is the new skeleton which has no immoralities. Again, we find the best vertex separator $a_2$ and the process continues. This procedure will terminate at some step $j$ when $a_j =b$ or there is only one node left which should be $b$ by Lemma \ref{lem:root}. Since the number of nodes reduce by about $1/3$ at least each time, and initially it can be at most $n$, this procedure terminates in at most $O(\log_2 n)$ steps.

 \subsection{Proof of Lemma \ref{inductree}}
The graph induced by two colors classes in any graph is a bi-partite graph and bi-partite graphs do not have odd induced cycles. Since the graph and any induced subgraph is chordal, it implies the induced graph on a pair of color classes does not have a cycle. This proves the theorem.

  \subsection{Proof of Theorem \ref{thm:infoLower}}
      Assume $n$ is even for simplicity. We define a family of partial order $\sigma^{(p)}$ as follows: Group $i,i+1$ into $C_i$. Ordering among $i$ and $i+1$ is not revealed. But all the edges between $C_i$ and $C_j$ for any $j>i$ are directed from $C_i$ to $C_j$. Now, one has to design a set of interventions such that exactly one node among every $C_i$ is intervened on at least once. This is because, if neither $i$ nor $i+1$ in $C_i$ are intervened on, then the direction between $i$ and $i+1$ cannot be figured out by applying rule R$2$ on any other set of directions in the rest of the graph. Since the size of every intervention is at most $k$ and at least $n/2$ nodes need to be covered by intervention sets, the number of interventions required is at least $\frac{n}{2k}$.

  \subsection{Proof of Theorem \ref{thm:randomized}}
\begin{proof}
Separate $n$ vertices arbitrarily into $\frac{n}{k}$ disjoint subsets $C_i$ of size-$k$. Let the first $n/k$ interventions $\{I_1,I_2,...,I_{n/k}\}$ be such that $I_i(v)=1$ if and only if $v\in C_i$. This divides the problem of learning a clique of size $n$ into learning $n/k$ cliques of size $k$. Then, we can apply the clique learning algorithm in \cite{Hu2014} as a black box to each of the $\frac{n}{k}$ blocks: Each block is learned with probability $k^{-c}$ after $\log c\log k$ experiments in expectation. For $k=cn^r$, choose $c>1/r-1$. Then the union bound over $n/k$ blocks yields probability polynomial in $n$. Since each block takes $\mathcal{O}(\log\log k)$ experiments, we need $\frac{n}{k}\mathcal{O}(\log\log k)$ experiments. 
\end{proof}

\subsection{Proof of Theorem \ref{lowbnd}}
We need the following definitions and some results before proving the theorem.
\begin{definition}
  A perfect elimination ordering $\sigma_p=\{v_1,v_2 \ldots v_n\}$ on the vertices of an undirected chordal graph $G$ is such that for all $i$, the induced neighborhood of $v_i$ on the subgraph formed by $\{v_1,v_2 \ldots v_{i-1} \}$ is a clique.
\end{definition}

\begin{lemma}\label{buhl1}
(\cite{hauser2014two}) If all directions in the chordal graph are according to perfect elimination ordering (edges go only from vertices lower in the order to higher in the order), then there are no immoralities.
\end{lemma}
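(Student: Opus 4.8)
The plan is a short proof by contradiction directly from the definition of a perfect elimination ordering, using the paper's definition of an immorality. Recall that an immorality is an induced triple $\{a,b,c\}$ with $a\to c$, $b\to c$, and $a,b$ non-adjacent in the skeleton. Suppose, for contradiction, that orienting every edge of the chordal graph $G$ from the lower-indexed endpoint to the higher-indexed endpoint with respect to the perfect elimination ordering $\sigma_p=\{v_1,\dots,v_n\}$ produces a DAG that contains such an immorality $\{a,b,c\}$.

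The key observation is that the hypothesis ``directions follow the PEO'' pins down which endpoint of each edge is the head: since every edge is directed from the smaller position in $\sigma_p$ to the larger one, $a\to c$ forces $\sigma_p^{-1}(a)<\sigma_p^{-1}(c)$ and $b\to c$ forces $\sigma_p^{-1}(b)<\sigma_p^{-1}(c)$. Writing $c=v_j$, both $a$ and $b$ then lie in $\{v_1,\dots,v_{j-1}\}$ and both are neighbors of $v_j$ in $G$. By the definition of a perfect elimination ordering, the induced subgraph on the neighborhood of $v_j$ within $\{v_1,\dots,v_{j-1}\}$ is a clique, so $a$ and $b$ are adjacent in $G$. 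This contradicts the assumption that $a$ and $b$ are disconnected, and hence no immorality occurs, proving the lemma.

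There is essentially no obstacle here; the only thing to be careful about is lining up the two definitions (immorality versus PEO) and noting that the orientation is completely determined by $\sigma_p$. I would also remark briefly that this is exactly the property that makes any PEO-consistent orientation a legitimate DAG in the Markov equivalence class of the given chordal skeleton, which is how the lemma feeds into the worst-case construction used to prove Theorem \ref{lowbnd}.
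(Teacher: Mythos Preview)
Your argument is correct and is the standard one-line proof: an immorality $a\to c\leftarrow b$ forces both $a$ and $b$ to precede $c$ in $\sigma_p$ and to be neighbors of $c$, so the PEO condition on $c$'s earlier neighborhood makes $a$ and $b$ adjacent, a contradiction. The paper itself does not prove this lemma; it is quoted from \cite{hauser2014two} and used as a black box in the proof of Theorem~\ref{lowbnd}, so there is nothing further to compare.
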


We make the following observation:  Let the directions in a graph $D$ be oriented according to an ordering $\sigma$ on the vertices. If a clique comes first in the ordering, then the knowledge of edge directions in the rest of the graph, excluding that of the clique, cannot help at any stage of the intervention process on the clique; because all the edges are directed outwards from the clique and hence none of the Meek rules apply. This is because, if $a \rightarrow b$ is to be inferred by Meek rules from other known directions, then either there has to be a known edge direction into $a$ or $b$ before the inference step. So if one of the directed edges not from the clique was to help in the discovery process, either that edge has to be directed towards $a$ or $b$ (like in Meek rules R$1$, R$2$ and R$3$), or it has to be directed towards $c$ in another $c \rightarrow a$ (R$4$) which belongs to the clique. Both the above cases are not possible.

\begin{lemma}\label{buhl2}
 (\cite{hauser2014two}) Let $C$ be a maximum clique of an undirected chordal graph $\ess(D)$, then there is an underlying DAG $D$ on the chordal skeleton that is oriented according to a perfect elimination ordering (implying no immoralities), where the clique $C$ occurs first.
\end{lemma}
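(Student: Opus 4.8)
\textbf{Proof proposal for Lemma \ref{buhl2}.}
The plan is to produce a perfect elimination ordering (in the sense of the Definition above) all of whose first $\lvert C\rvert$ vertices are exactly the vertices of $C$, orient edges from lower index to higher index, and then let Lemma \ref{buhl1} supply the ``no immoralities'' part for free. Concretely, I would isolate the following purely combinatorial claim and prove it by induction on $\lvert V\rvert-\lvert C\rvert$: for every chordal graph $G=(V,E)$ and every clique $C\subseteq V$ there is an ordering $\sigma=(v_1,\dots,v_n)$ of $V$ with $\{v_1,\dots,v_{\lvert C\rvert}\}=C$ such that $N(v_i)\cap\{v_1,\dots,v_{i-1}\}$ is a clique for every $i$. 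Given such a $\sigma$, orient each edge $v_iv_j$ with $i<j$ as $v_i\to v_j$. By Lemma \ref{buhl1} the resulting DAG $D$ has no immoralities, and since $C=\{v_1,\dots,v_{\lvert C\rvert}\}$ occupies the lowest indices, every edge with exactly one endpoint in $C$ is directed out of $C$; thus $C$ ``occurs first'' in $D$, as required. When $C$ is a \emph{maximum} clique we moreover have $\lvert C\rvert=\chi(\ess(D))$, which is the form in which this lemma is used in the proof of Theorem \ref{lowbnd}.

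For the induction, the base case $\lvert V\rvert=\lvert C\rvert$ is immediate: then $V=C$, $G$ is complete, and any ordering works. For the inductive step with $\lvert V\rvert>\lvert C\rvert$, the goal is to peel off a simplicial vertex lying \emph{outside} $C$ and place it last. If $G$ is complete, every vertex is simplicial and I pick any $u\notin C$ (one exists because $\lvert V\rvert>\lvert C\rvert$). If $G$ is not complete, I invoke Dirac's theorem: a non-complete chordal graph has at least two non-adjacent simplicial vertices; a clique cannot contain two non-adjacent vertices, so at least one of them, call it $u$, satisfies $u\notin C$. In either case $u$ is simplicial in $G$ and $u\notin C$. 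Now $G-u$ is chordal, $C$ is still a clique of it, and $\lvert V(G-u)\rvert-\lvert C\rvert<\lvert V\rvert-\lvert C\rvert$, so by the inductive hypothesis $G-u$ has an ordering $(v_1,\dots,v_{n-1})$ with first $\lvert C\rvert$ entries equal to $C$ and the stated clique property. Appending $v_n:=u$ preserves it, since $N(v_n)\cap\{v_1,\dots,v_{n-1}\}=N(u)$ is a clique by simpliciality; the first $\lvert C\rvert$ entries are unchanged. This closes the induction.

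The single genuine obstacle is the need, at each inductive step, for a simplicial vertex \emph{outside} $C$: every chordal graph has a simplicial vertex, but a priori it could be a vertex of $C$, which would stall the argument. This is exactly where the strong form of Dirac's theorem (two \emph{non-adjacent} simplicial vertices) does the work, together with the trivial remark that a clique cannot contain two non-adjacent vertices. An alternative, induction-free route would be to take a clique tree of $G$, root it at $C$ (legitimate because a maximum clique is maximal and hence a node of any clique tree), and read off an elimination ordering by repeatedly deleting a simplicial vertex from a leaf clique; the running-intersection property guarantees that no deleted vertex lies in $C$ until only $C$ remains. Either way, everything after the ordering is obtained is routine: the orientation convention plus Lemma \ref{buhl1}.
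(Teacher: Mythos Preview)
The paper does not actually prove Lemma~\ref{buhl2}: it is stated with a citation to \cite{hauser2014two} and used as a black box in the proof of Theorem~\ref{lowbnd}. So there is no ``paper's own proof'' to compare against here.

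Your argument is nonetheless a correct, self-contained proof of the claim. The inductive use of Dirac's theorem (two non-adjacent simplicial vertices in a non-complete chordal graph, hence at least one outside the clique $C$) is exactly the right tool, and the alternative clique-tree argument you sketch is also sound and is essentially how this fact is established in the chordal-graph literature. One very minor remark: your inductive claim does not actually require $C$ to be a \emph{maximum} clique---any clique works---and you correctly flag this; the maximality is only used downstream to match $\lvert C\rvert$ with $\chi(\ess(D))$.
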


 By Lemmas \ref{buhl1}, \ref{buhl2} and the observation above, given a chordal skeleton, we can construct a DAG on the skeleton with no immoralities such that the directions of the maximum clique in $D$ cannot be learned by using knowledge of the directions outside. This means that only the intervention sets $\{I_1 \bigcap C, I_2 \bigcap C  \ldots\}$ matter for learning the directions on this clique. Therefore inference on the clique is isolated. Hence, all the lower bounds for the clique case transfer to this case and since the size of the largest clique is exactly the coloring number of the chordal skeleton, the theorem follows.

 \subsection{Proof of Theorem \ref{lower-bound}} 
  \textit{Example with a feasible solution with $\lvert {\cal I} \rvert$ close to the lower bound:}
    Consider a graph $G$ that can be partitioned into a clique of size $\chi$ and an independent set $\alpha$. Such graphs are called split graphs and as $n \rightarrow \infty$, the fraction of split graphs to chordal graphs tends to $1$. 
   If $\ess(D)=G$ where $G$ is a split graph skeleton, it is enough to intervene only on the nodes in the clique and therefore the number of interventions that are needed is that for the clique. It is certainly possible to orient the edges in such a way so as to avoid immoralities, since the graph is chordal.

   \textit{Example with $\lvert {\cal I} \rvert$ which needs to be close to the upper bound:}
    We construct a connected chordal skeleton with independent set $\alpha$ and clique size $\chi$ (also coloring number) such that it would require $\frac{\alpha  (\chi-1)}{2k}$ interventions at least for any algorithm over a class of orientations. 
    
    Consider a line $L$ consisting of vertices $1,2 \ldots 2 \alpha$ such that every node $1<i<2\alpha$ is connected to $i-1$ and $i+1$. For, all $1 \leq p \leq \alpha$, consider a clique $C_p$ of size $\chi$ which only has nodes $2p-1,2p$ from the line $L$. Now assume that the actual orientation of the L is  $1 \rightarrow 2 \ldots \rightarrow 2 \alpha$. In every clique, the orientation is partially specified as follows: In every clique $C_p$, all edges from node $2p-1$ are outgoing. It is very clear that this partial orientation excludes all immoralities. Further, each clique $C_p-\{2p-1\}$ can have any arbitrary orientation out of $\chi-1$ possible ones in the actual DAG. Now, even if all the specified directions are revealed to the algorithm, the algorithm has to intervene on all $\alpha$ disjoint cliques $\{C_p-\{2p-1\}\}_{p=1}^{\alpha}$ each of size $\chi-1$ and directions in one clique will not force directions on the others through any of the Meek rules or rule R$0$. Therefore, the lower bound of $\frac{\alpha(\chi-1)}{2}$ total node accesses (total number of nodes intervened) is implied by Theorem \ref{thm:infoLower}. Given every intervention is of size $k$, these chordal skeletons with the revealed partial order needs at least $\frac{\alpha(\chi-1)}{2k}$ more experiments.

\subsection{Performance Comparison of Our Algorithm vs. Naive Scheme for $n=500,k=10$ and $n=2000,k=20$}
\begin{figure*}[ht!]
\centering
\subfigure[$n=500,k=10$]{\label{fig:n500k10}\includegraphics[width=0.45\columnwidth]{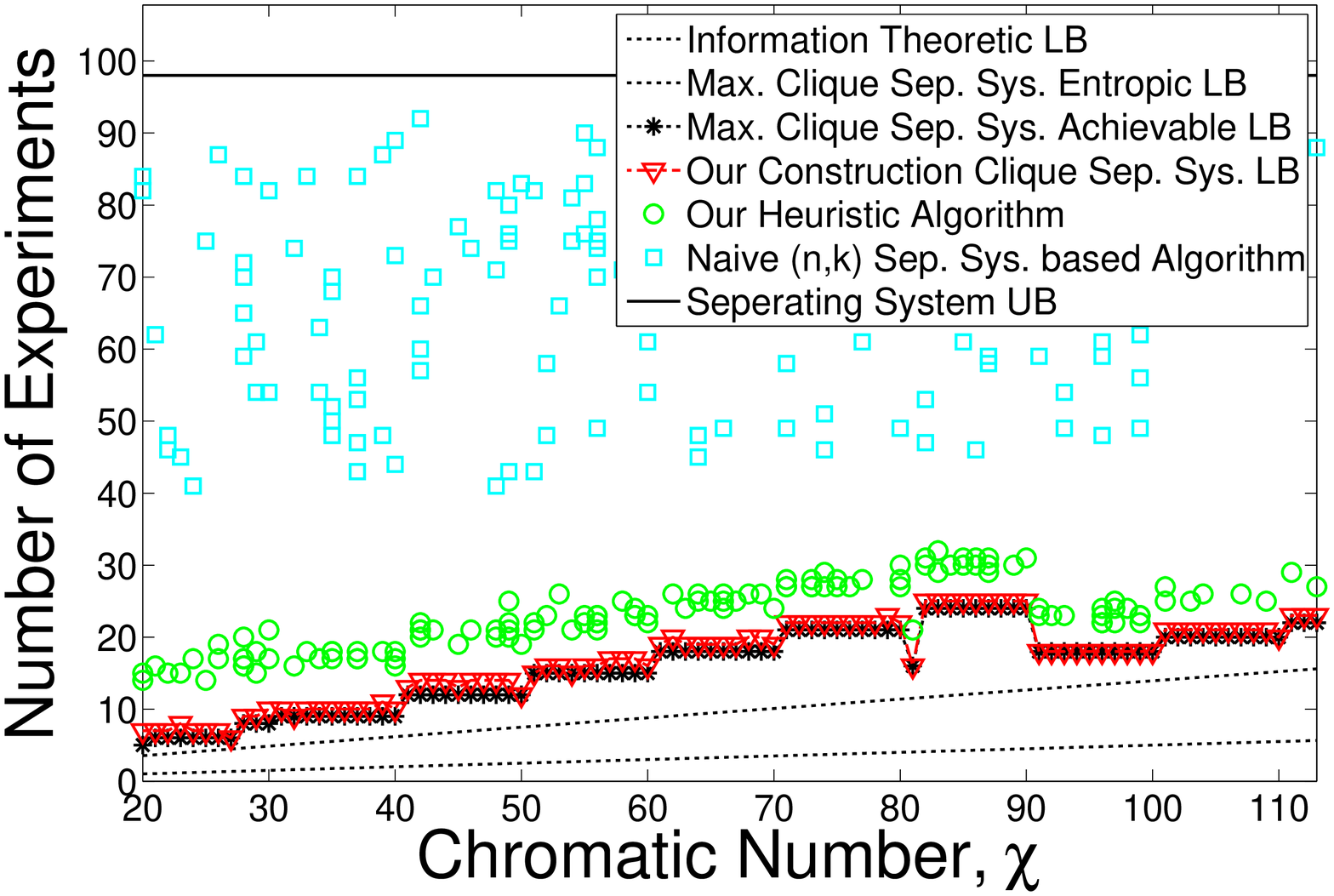}}
\subfigure[$n=2000,k=20$]{\label{fig:n2000k20}\includegraphics[width=0.45\columnwidth]{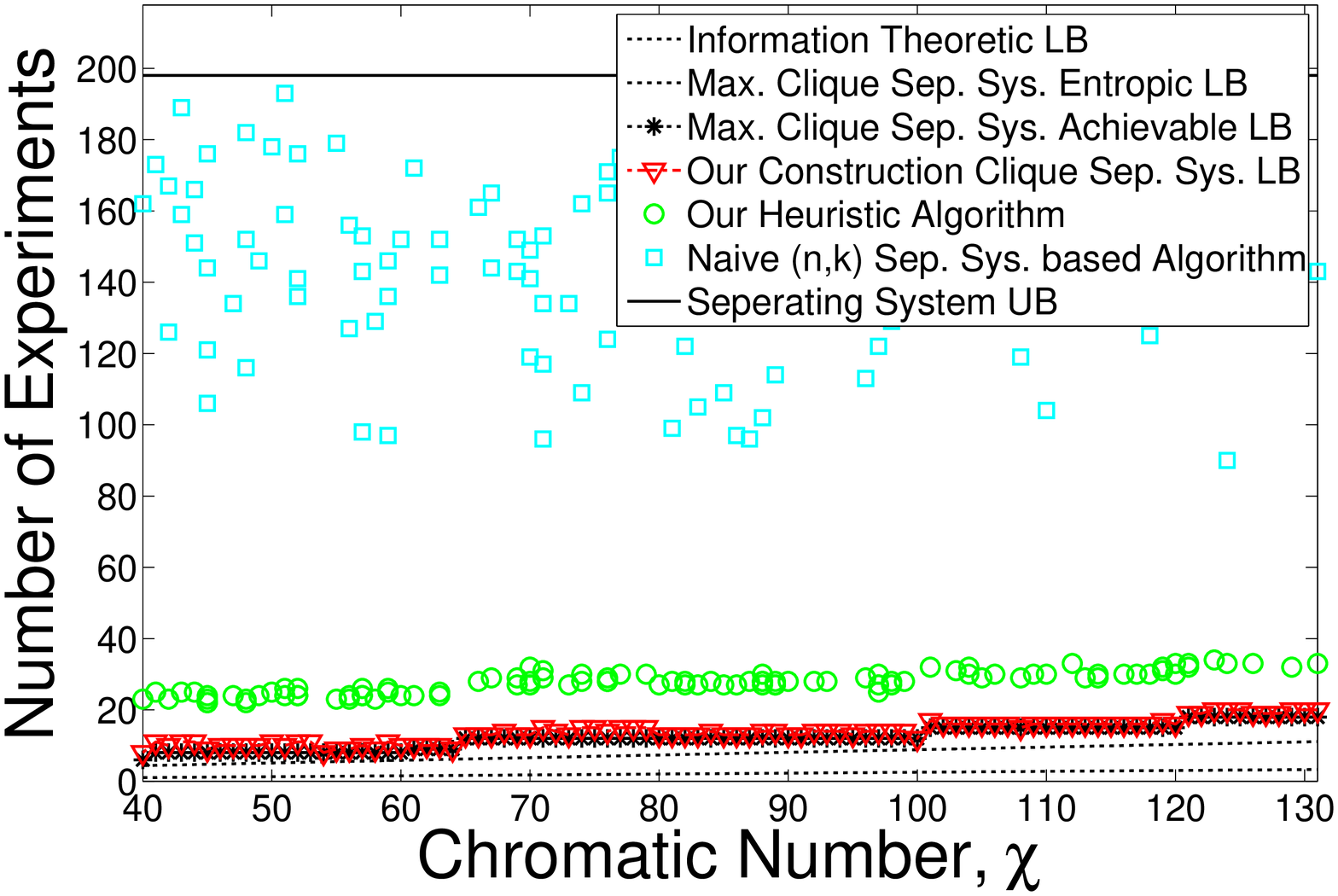}}
\caption{$n$: no. of vertices, $k$: Intervention size bound. The number of experiments is compared between our heuristic and the naive algorithm based on the $(n,k)$ separating system on random chordal graphs. The red markers represent the sizes of $(\chi,k)$ separating system. Green circle markers and the cyan square markers for the same $\chi$ value correspond to the number of experiments required by our heuristic and the algorithm based on an $(n,k)$ separating system(Theorem \ref{const}), respectively, on the same set of chordal graphs. All four plots (including the ones in the main text) indicate that our algorithm requires number of experiments proportional to the clique number $\chi$, whereas naive separating system based algorithm requires experiments on the order of number of variables $n$.
}
\label{fig:simulation2}
\end{figure*}

\subsection{Proof of Theorem \ref{Correctness}}
We provide the following justifications for the correctness of Algorithm \ref{alg:hybrid}. 
  \begin{enumerate}
    \item At line \ref{linecolor} of the algorithm, when Meek rules and R$0$ are applied after every intervention, the intermediate graph $G$, with unlearned edges, will always be a disjoint union of chordal components (refer to (\ref{eqn:learnseq}) and the comments below) and hence a chordal graph.   
    \item  The number of unlearned edges before and after the main while loop in Algorithm \ref{alg:hybrid} reduces by at least one. Every edge in $E$ is incident on two colors and one of the colors is always picked for processing because we use a separating system on the colors. Therefore, one node belonging to some edge has a positive score and is intervened on. The edge direction is learnt through rule R$0$. Therefore, the algorithm terminates. 
    \item It identifies the correct $\vec{G}$ because every edge is inferred after some intervention $I_t$ by applying rule R$0$ and Meek rules as in (\ref{eqn:learnseq}) both of which are correct.
    \item the algorithm has polynomial run time complexity because the main while loop ends in time $\lvert E \rvert$.
  \end{enumerate}
  \newpage
  
\small{
}

\end{document}